\documentclass[11pt]{article} 
\usepackage[utf8x]{inputenc}

\RequirePackage{amsthm,amsmath,amsfonts,amssymb}
\RequirePackage{graphicx}
\RequirePackage{natbib} 
\usepackage{authblk} 
\usepackage[dvipsnames]{xcolor}
\definecolor{Plum}{rgb}{.4,0,.4} 
\definecolor{BrickRed}{rgb}{0.6,0,0} 
\usepackage[plainpages=false,pdfpagelabels,colorlinks=true,
linkcolor=RedOrange,urlcolor=RedOrange,citecolor=MidnightBlue]{hyperref}
\providecommand{\keywords}[1]{\small \textbf{\textit{Keywords---}} #1}

\usepackage{marginnote} 

\newcommand\numberthis{\addtocounter{equation}{1}\tag{\theequation}}
\renewcommand\qed{$\blacksquare$}
\usepackage{booktabs}
\usepackage{fullpage}
\usepackage{subcaption}
\usepackage[ruled]{algorithm2e}


\usepackage{bbm}
\usepackage{listings}
\def\ddefloop#1{\ifx\ddefloop#1\else\ddef{#1}\expandafter\ddefloop\fi}
\def\ddef#1{\expandafter\def\csname b#1\endcsname{\ensuremath{\mathbf{#1}}}}
\ddefloop ABCDEFGHIJKLMNOPQRSTUVWXYZ\ddefloop
\def\ddef#1{\expandafter\def\csname s#1\endcsname{\ensuremath{\mathsf{#1}}}}
\ddefloop ABCDEFGHIJKLMNOPQRSTUVWXYZ\ddefloop
\def\ddef#1{\expandafter\def\csname b#1\endcsname{\ensuremath{\mathbf{#1}}}}
\ddefloop abcdefghijklmnopqrstuvwxyz\ddefloop

\def\E{\mathop{\mathbb{E}}}
\def\Pr{\mathop{\mathbb{P}}}

\def\tr{\operatorname{Tr}}
\def\Reals{\mathbb{R}}
\def\Naturals{\mathbb{N}}

\def\Hil{\mathcal{H}}

\newtheorem{theorem}{Theorem}
\newtheorem{lemma}[theorem]{Lemma}
\newtheorem{proposition}[theorem]{Proposition}

\usepackage{xcolor}
\definecolor{codegreen}{rgb}{0,0.6,0}
\definecolor{codegray}{rgb}{0.5,0.5,0.5}
\definecolor{codepurple}{rgb}{0.58,0,0.82}
\definecolor{backcolour}{rgb}{0.95,0.95,0.92}

\lstdefinestyle{mystyle}{
    backgroundcolor=\color{backcolour},   
    commentstyle=\color{codegreen},
    keywordstyle=\color{magenta},
    numberstyle=\tiny\color{codegray},
    stringstyle=\color{codepurple},
    basicstyle=\ttfamily\footnotesize,
    breakatwhitespace=false,         
    breaklines=true,                 
    captionpos=b,                    
    keepspaces=true,                 
    numbersep=5pt,                  
    showspaces=false,                
    showstringspaces=false,
    showtabs=false,                  
    tabsize=2
}

\lstset{style=mystyle}


\begin{document}
	\title{Universal Prediction Band via Semi-Definite Programming}

	\author[1]{Tengyuan Liang
	\thanks{\href{mailto:tengyuan.liang@chicagobooth.edu}{tengyuan.liang@chicagobooth.edu}.}}
\affil[1]{University of Chicago, Booth School of Business}
\date{}
\maketitle

\begin{abstract}
We propose a computationally efficient method to construct nonparametric, heteroscedastic prediction bands for uncertainty quantification, with or without any user-specified predictive model. Our approach provides an alternative to the now-standard conformal prediction for uncertainty quantification, with novel theoretical insights and computational advantages. The data-adaptive prediction band is universally applicable with minimal distributional assumptions, has strong non-asymptotic coverage properties, and is easy to implement using standard convex programs. Our approach can be viewed as a novel variance interpolation with confidence and further leverages techniques from semi-definite programming and sum-of-squares optimization. Theoretical and numerical performances for the proposed approach for uncertainty quantification are analyzed.
\end{abstract}

\keywords{Uncertainty quantification, heteroscedasticity, nonparametric prediction band, semi-definite programming, sum-of-squares.}


\section{Introduction}

A plausible criticism from the statistics community to modern machine learning is the lack of rigorous uncertainty quantification, with perhaps the exception in conformal prediction \citep{vovk2005algorithmic, lei2018DistributionFreePredictive, romano2019ConformalizedQuantile}. Instead, the machine learning community would argue that conventional uncertainty quantification based on idealized distributional assumptions may be too restrictive for real data. Nevertheless, without a doubt, uncertainty quantification for predictive modeling is essential to statistics, learning theory, and econometrics.  This paper will resolve the above inference dilemma by introducing a new method with provable uncertainty quantification via semi-definite programming. This paper provides an alternative approach to the now-standard conformal prediction for uncertainty quantification, with novel theoretical insights and computational advantages. The proposed method learns a data-adaptive, heteroscedastic prediction band that is: (a) universally applicable without strong distributional assumptions, (b) with desirable theoretical coverage with or without any user-specified predictive model, and (c) easy to implement via standard convex programs (when used in conjunction with a wide range of positive-definite kernels).

Let $(x, y) \in \mathcal{X} \times \mathbb{R}$ be the covariates and response data pair drawn from an unknown probability distribution $\mathcal{P}$. There are plenty of regression or predictive models---denoted by $\mathsf{m}_0(x)$---that estimate $\mathsf{m}(x) := \E[\by|\bx = x]$ sufficiently well with finite data. However, to make downstream decisions reliable, a good prediction band quantifying the uncertainty in $|\by - \mathsf{m}_0(\bx)|$ with provable coverage is urgently needed. The prediction band is of particular relevance to complex machine learning models that construct $\mathsf{m}_0(x)$ in a less transparent way, such as deep neural networks and boosting machines. 
This paper makes progress in filling in such a gap: we estimate a nonparametric, heteroscedastic prediction band $\widehat{\mathsf{PI}}(x)$ that enjoys provable coverage with minimal data assumptions for any predictive model. Our approach can be viewed as a novel variance interpolation with confidence and leverages techniques from sum-of-squares relaxations for nonparametric variance estimation. On a non-technical level, this paper enriches the toolbox of applied researchers with a theoretically justified new methodology for uncertainty quantification and visualization, as in conformal prediction.

\subsection{Semi-definite Programs and Prediction Bands}
\label{sec:Intro-SDP-PB}

We introduce our procedure for constructing the predictive band in this section.
Let $K(\cdot, \cdot): \mathcal{X} \times \mathcal{X} \rightarrow \mathbb{R}$ be a continuous symmetric and positive-definite kernel function. Given $n$ data pairs $\{ (x_i, y_i) \}_{i=1}^n$ and the corresponding kernel matrix $\bK \in \mathbb{S}^{n \times n}$ with $\bK_{ij}=K(x_i, x_j)$, our prediction band is constructed based on the following semi-definite program (SDP)
\begin{align*}
\min_{\bB}\quad & \tr(\bK \bB ) \\
\text{s.t.}\quad & \langle \sK_i , \bB \sK_i \rangle \geq (y_i - \mathsf{m}_0(x_i))^2, ~ i=1,\ldots, n \numberthis \label{eqn:SDP-general} \\
		& \bB \succeq 0
\end{align*}
where the optimization variable $\bB \in \mathbb{S}^{n\times n}$ is a symmetric positive semi-definite (PSD) matrix, $\mathsf{m}_0(\cdot): \mathcal{X} \rightarrow \mathbb{R}$ is a given predictive model (user-specified), and $\sK_i \in \mathbb{R}^n$ denotes the $i$-th column of the kernel matrix $\bK$. Given the estimated $\widehat{\bB}$, the \textbf{prediction band}, $\widehat{\mathsf{PI}}(x)$ that maps each $x$ to an interval, can be constructed accordingly
\begin{align*}
	& \widehat{\mathsf{PI}}(x, \delta) := \left[~ \mathsf{m}_0(x) - \sqrt{(1+\delta) \cdot \widehat{\mathsf{v}}(x)} ~,~ \mathsf{m}_0(x) + \sqrt{(1+\delta)\cdot \widehat{\mathsf{v}}(x)}  ~\right], ~ \forall x \in \mathcal{X}\;,\\
	&~~\text{where} \quad \widehat{\mathsf{v}}(x) := \langle \sK_x, \widehat{\bB} \sK_x \rangle  \;, \numberthis \\
	&~~\text{and} \quad \sK_x := [ K(x, x_1),\ldots, K(x, x_n) ]^\top \in \mathbb{R}^n \;,
\end{align*}
with $\delta \in \mathbb{R}$ being a scalar quantifying confidence. $\delta$ can be later calibrated for exact coverage control, and may be set as $0$ if $n$ is large. Here $\widehat{\mathsf{v}}(x)$ estimates the variability in the ``deviations'' $e_i := y_i - \mathsf{m}_0(x_i)$.
A few remarks on such deviations are in place. 
\begin{itemize}
	\item First, $e_i$'s can be computed based on any user-specified predictive model $\mathsf{m}_0(x)$ that estimates the conditional mean $\mathsf{m}_0(x) \approx \E[\by|\bx = x]$, be it accurate or not.
	\item Second, in the absence of such a predictive model for the conditional mean, one can set $\mathsf{m}_0(x) \equiv 0$ and learn a conditional second-moment function to assess uncertainty.
	\item Last, as shown next in \eqref{eqn:SDP-simultaneous}, in practice, one can simultaneously learn the conditional mean and variance functions using a variant of the above SDP. Therefore, a pre-specified model $\mathsf{m}_0(x)$ is not required.
\end{itemize}

Let $K^{\mathsf{m}}$ and $K^{\mathsf{v}}$ specify two kernel functions, corresponding to the conditional mean and variance functions respectively. $\bK^{\mathsf{m}}, \bK^{\mathsf{v}} \in \mathbb{S}^{n \times n}$ denote empirical kernel matrices on finite data with size $n$. For any $\gamma\geq 0$, the following convex SDP program constructs the prediction band and the conditional mean function simultaneously
\begin{align*}
\min_{\alpha, \bB}\quad & \gamma \cdot \big\langle \alpha, \bK^{\mathsf{m}} \alpha \big\rangle +  \tr\big(\bK^{\mathsf{v}} \bB \big) \\
\text{s.t.}\quad & \big\langle \sK_i^{\mathsf{v}} , \bB \sK_i^{\mathsf{v}} \big\rangle \geq \big( y_i - \big\langle \sK_i^{\mathsf{m}}, \alpha \big\rangle \big)^2, ~ i=1,\ldots, n \numberthis \label{eqn:SDP-simultaneous} \\
		& \bB \succeq 0
\end{align*}
where the optimization variables are $\bB \in \mathbb{S}^{n\times n}$ and $\alpha \in \mathbb{R}^n$. Given the solution $\widehat{\bB}$ and $\widehat{\alpha}$, the $\widehat{\mathsf{PI}}(x)$ is constructed as
\begin{align*}
	&\widehat{\mathsf{PI}}(x, \delta) :=  \left[~ \widehat{\mathsf{m}}(x) - \sqrt{(1+\delta) \cdot \widehat{\mathsf{v}}(x)} ~,~ \widehat{\mathsf{m}}(x) + \sqrt{(1+\delta) \cdot \widehat{\mathsf{v}}(x)}  ~\right], ~ \forall x \in \mathcal{X}\;,\\
	&\text{where}~  \widehat{\mathsf{m}}(x) := \big\langle \sK_i^{\mathsf{m}}, \widehat{\alpha} \big\rangle ~\text{and}~  \widehat{\mathsf{v}}(x) := \big\langle \sK_x^{\mathsf{v}}, \widehat{\bB} \sK_x^{\mathsf{v}} \big\rangle\;.
\end{align*}

\subsection{A Numerical Illustration}

Before diving into the motivations behind the above SDPs (Sec.~\ref{sec:sos-mni}) and corresponding theory (Sec.~\ref{sec:Theory-Uncertainty-Quantification}), let us first visually illustrate the empirical performance of the constructed prediction bands on a toy numerical example. A complete simulation study comparing our methods and conformal predictions will be deferred to Section~\ref{sec:comparision-conformal}. Details of the data generating processes will be elaborated therein as well. The quick exercise is to showcase that convex programs \eqref{eqn:SDP-general} and (\ref{eqn:SDP-simultaneous}) are easy to implement using standard optimization toolkits (say, CVX \citep{cvx}), and construct flexible prediction bands with desired coverage properties. As a motivating example, we try out the SDP (\ref{eqn:SDP-simultaneous}), which simultaneously estimates the conditional mean and variance functions. A minimal 10-line Python implementation is provided in Listing.~\ref{code}.


\begin{figure*}[tbhp]
	\centering
	    \begin{subfigure}{.5\textwidth}
	        \centering
	        \includegraphics[width=\linewidth]{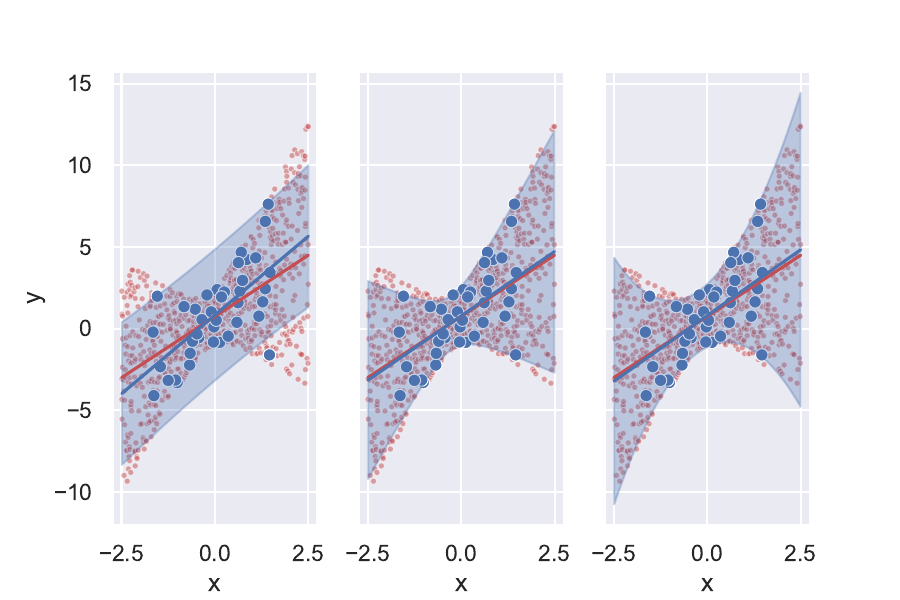}
			\caption{}
	        \label{fig:intro_example1}
	    \end{subfigure}%
	    \begin{subfigure}{0.5\textwidth}
	        \centering
	        \includegraphics[width=\linewidth]{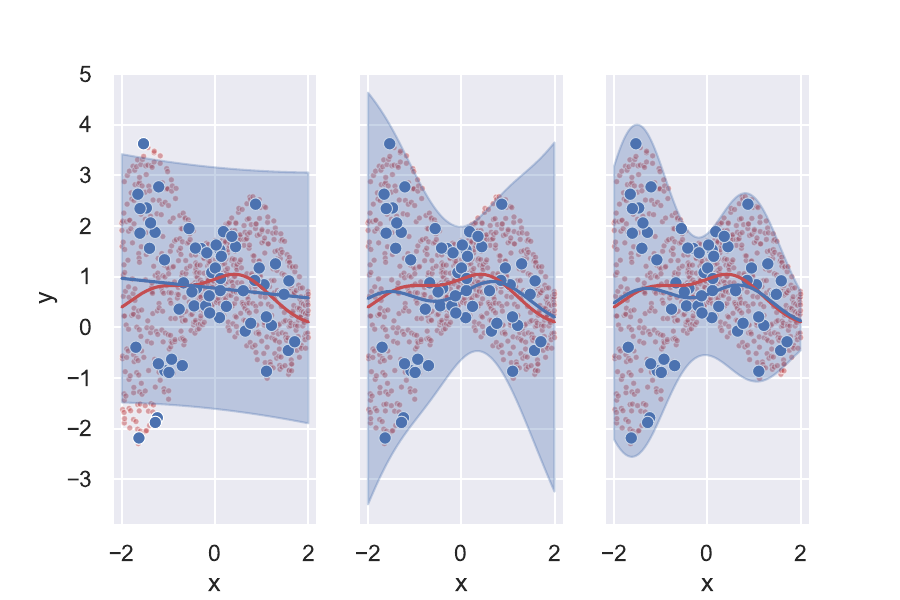}
			\caption{}
	        \label{fig:intro_example2}
	    \end{subfigure}
		\caption{\scriptsize From left to right: SLR, SDP1, and SDP2. For each plot, \textcolor{blue}{Blue dots} denote training data $\{(x_i, y_i)\}_{i=1}^n$, \textcolor{blue}{Blue line} denotes the estimated conditional mean $\widehat{\mathsf{m}}(x)$, and \textcolor{blue}{Blue band} denotes the estimated prediction band $\widehat{\mathsf{PI}}(x)$. \textcolor{red}{Red dots} represent the unknown test distribution, and \textcolor{red}{Red line} denotes the true conditional mean $\mathsf{m}(x) = \E[\by|\bx = x]$.  Here the training and test data share the same conditional distribution $\by|\bx = x$ and thus $\mathsf{m}(x)$. The training and test data are shared in three plots. A good coverage corresponds to when \textcolor{blue}{Blue band} covers essentially all \textcolor{red}{Red dots}. Statistics are summarized in Table~\ref{table:intro-example}.}
\end{figure*}


The first example is a linear model with heteroscedastic error: the conditional mean $\mathsf{m}(x)$ being a linear function and variance $\mathsf{v}(x)$ being a quadratic (with the conditional variance generated from a uniform distribution). We generate a training dataset of size $n = 40$ and compare the coverage among three methods: (a) SLR, simple linear regression, (b) SDP1, a SDP~(\ref{eqn:SDP-simultaneous}) with linear kernels $K^{\mathsf{m}}$ and $K^{\mathsf{v}}$ for both mean and variance functions, and (c) SDP2, a SDP~(\ref{eqn:SDP-simultaneous}) with a linear $K^{\mathsf{m}}$ and a (degree-$3$) polynomial kernel $K^{\mathsf{v}}$. The coverage is compared on the same test dataset of size $N=800$. Here $\gamma = 0.1$. 
See Fig.~\ref{fig:intro_example1} for details and Table.~\ref{table:intro-example} for coverage statistics. 

The second example is a non-linear, heteroscedastic error model: mean $\mathsf{m}(x)$ and variance $\mathsf{v}(x)$ functions lying in a reproducing kernel Hilbert space (RKHS) with a radial basis function (rbf) kernel. Here $n = 60$ training samples and $N = 800$ test samples are generated. Three methods being compared are: (a) SLR, (b) SDP1, rbf kernel for $K^{\mathsf{m}}$ and linear kernel for $K^{\mathsf{v}}$, and (c) SDP2, rbf kernels for both $K^{\mathsf{m}}$ and $K^{\mathsf{v}}$, summarized in Fig.~\ref{fig:intro_example2} and Table.~\ref{table:intro-example}. Here $\gamma = 1$.

\begin{table}
\centering
\caption{Simulated examples}
\label{table:intro-example}
\begin{tabular}{lrrrr}
	 & Coverage & Median Len & Average Len & MSE \\
\midrule
\multicolumn{4}{l}{Fig.~\ref{fig:intro_example1}: linear $\mathsf{m}(x)$, quadratic $\mathsf{v}(x)$} \\
\midrule
SLR  & 85.88\% & 8.2057 & 8.2658 & 0.6294 \\
SDP1 & 91.13\% & 7.4689 & \textbf{7.7173} & \textbf{0.1146}\\
SDP2 & \textbf{94.00\%} & \textbf{7.2962} & 8.3361 & 0.1720 \\
\midrule
\multicolumn{4}{l}{Fig.~\ref{fig:intro_example2}: rbf $\mathsf{m}(x)$, rbf $\mathsf{v}(x)$} \\
\midrule
SLR  & 96.13\% & 4.8048 & 4.8185 & 0.2556 \\
SDP1 & 99.25\% & 4.4138 & 4.6196 & 0.1916\\
SDP2 & \textbf{99.50\%} & \textbf{3.3488} & \textbf{3.7506} & \textbf{0.1670} \\
\bottomrule
\end{tabular}
\end{table}

%
%
%
%

%
%
%
%

These two numerical examples are minimal yet informative. In Fig.~\ref{fig:intro_example1}, SLR misspends a wide prediction bandwidth on data where the conditional variances are small yet fails to capture the large conditional variance cases, resulting in the overall coverage of $86\%$ and a median bandwidth of $8.21$. SDP1/SDP2 re-distributes the bandwidth budget leveraging the heteroscedastic nature and achieves an improved coverage $91\%$/$94\%$, with a smaller median bandwidth of $7.47$/$7.30$. Such an effect is even more pronounced in Fig.~\ref{fig:intro_example2}. Observe first that in SDP2, the prediction band constructed by \eqref{eqn:SDP-simultaneous} almost perfectly contours the heteroscedastic variances, thus achieving a $>99\%$ prediction coverage with a merely $3.35$ median bandwidth, in contrast to SLR with a $96\%$ coverage and a $4.80$ bandwidth. Second, a better conditional variance estimate also improves performance in learning the conditional mean, as seen in the differences between \textcolor{blue}{Blue lines} and \textcolor{red}{Red lines}. The errors are also numerically summarized in the column ``MSE'' of Table~\ref{table:intro-example}. Leveraging the heteroscedasticity in data, our prediction band distributes the bandwidth in a data-adaptive way, thus improving the overall coverage.

\subsection{Sum-of-Squares, Interpolation and Connections to Literature}
\label{sec:sos-mni}
The SDPs proposed in \eqref{eqn:SDP-general} and (\ref{eqn:SDP-simultaneous}) are inspired by recent advancements in optimization and learning theory. We will elaborate on the connections to related works and explain the innovations in our approach. 
We start with some basic observations about the SDPs. First, when $\alpha$ is not an optimization variable, (\ref{eqn:SDP-simultaneous}) recovers (\ref{eqn:SDP-general}). Second, the constraints set of (\ref{eqn:SDP-simultaneous}) is always non-empty since $\alpha = 0, ~\bB = \max_i  \|\sK_i^{\mathsf{v}} \|^{-2} y_i^2 \cdot \bI$ is feasible. 

\paragraph{Sum-of-Squares and Phase Retrieval}
As shown in Prop.~\ref{prop:representation}, the infinite-dimensional SDP with a nuclear-norm minimization is equivalent to \eqref{eqn:SDP-simultaneous}, 
	\begin{align*}
	\min_{\beta \in \Hil^{\mathsf{m}}, ~\bA: \Hil^{\mathsf{v}} \rightarrow \Hil^{\mathsf{v}}} \quad & \gamma \cdot \| \beta \|_{\Hil^{\mathsf{m}}}^2 +  \| \bA \|_{\star} \\
	\text{s.t.}\quad & \langle \phi_{x_i}^{\mathsf{v}} , \bA \phi_{x_i}^{\mathsf{v}} \rangle_{\Hil^{\mathsf{v}}} \geq \big( y_i - \langle  \phi_{x_i}^{\mathsf{m}}, \beta \rangle_{\Hil^{\mathsf{m}}} \big)^2, ~ \forall i \;.\\
			& \bA \succeq 0
	\end{align*}
Here $\Hil^{\mathsf{m}}, \Hil^{\mathsf{v}}$ denote two RKHSs where the conditional mean and variance functions reside. $\phi_{x_i} \in \Hil$ is the feature map w.r.t. the Hilbert space $\Hil$ and $\langle \cdot,\cdot \rangle_{\Hil}$ is the Hilbert space inner-product. We call it the infinite-dimensional SDP since the optimization variables $(\beta, \bA)$ are (function, operator) rather than finite-dimensional (vector, matrix). A few remarks are in place. First, if the kernel $K^{\mathsf{m}}$ is universal, $\langle  \phi_{x}^{\mathsf{m}}, \beta \rangle_{\Hil^{\mathsf{m}}}$ is dense in $L^2$ and hence can universally approximate all conditional mean function $\mathsf{m}(x)$. 
Second, as for the conditional variance which has positivity constraints over a continuum $x \in \mathcal{X}$ with
$0 \leq \mathsf{v}(x) = \big( y - \mathsf{m}(x) \big)^2$,  
we relax the positivity constraints using a sum-of-squares form
\begin{align}
	0 \leq \langle \phi_{x}^{\mathsf{v}} , \bA \phi_{x}^{\mathsf{v}} \rangle_{\Hil^{\mathsf{v}}} = \big( y - \mathsf{m}(x) \big)^2 ~, ~ \text{for some}~ \bA \succeq 0  \;. \label{eqn:sos-step}
\end{align}
It turns out that when $K^{\mathsf{v}}$ is universal, the above sum-of-squares function can approximate all smooth, positive functions \citep{fefferman1978positivity, bagnell2015learning, marteau-ferey2020NonparametricModels}, thus explaining the name ``universal'' in the title. Remark that sum-of-squares optimization \citep{lasserre2001global} for nonparametric estimation has recently been considered; see \citep{bagnell2015learning, marteau-ferey2020NonparametricModels, curmei2020shape}. The further relaxation changing from equality in (\ref{eqn:sos-step}) to inequality will be discussed in the next paragraph. Last, the minimum nuclear norm objective translates to a particular form of ``\textbf{minimum bandwidth}'' in the prediction band as $\mathsf{v}(x) = \langle \phi_x^{\mathsf{v}}, \bA \phi_x^{\mathsf{v}} \rangle_{\Hil^{\mathsf{v}}}$. In language, for all prediction bands that shelter the data, \eqref{eqn:SDP-simultaneous} aims to find the one with minimum bandwidth.

The curious reader may wonder where the nuclear norm $\| \bA \|_{\star} $ arises from. The first reason is conceptual: the nuclear norm is a relaxation for rank, and the procedure is to minimize the number of factors (rank) that explain the variance. The second reason is a connection to phase retrieval: specify $K^{\mathsf{m}}(x, x') \equiv 0$ and $K^{\mathsf{v}}(x, x') = \langle x, x' \rangle$ (the linear kernel with $\phi_{x}^{\mathsf{v}} = x$), and force the inequality constraints to be equal, our SDP in \eqref{eqn:SDP-simultaneous} is equivalent to phase retrievel
\begin{align*}
	\min_{\bA \succeq 0}~   \| \bA \|_{\star}, ~\text{s.t.}\quad & \langle x_i , \bA x_i \rangle = y_i^2, ~ \forall i = 1, \ldots, n \;.
\end{align*}
Conceptually, the minimum nuclear norm procedure estimates the smallest number of factors that could generate the variance.

\paragraph{Min-norm Variance Interpolation with Confidence}
Now we discuss the tuning parameter $\gamma \in [0, \infty]$ and reveal the connection to the recent min-norm interpolation literature \citep{liang2020JustInterpolate, bartlett2020BenignOverfitting, bartlett2021DeepLearning, ghorbani2020LinearizedTwolayers, montanari2020GeneralizationError, liang2021InterpolatingClassifiers}. In the limit of $\gamma \rightarrow 0$, \eqref{eqn:SDP-simultaneous} reduces to the familiar min-norm interpolation with kernel $\bK^{\mathsf{m}}$ (whenever it has full rank, since optimal $\bB=0$)
\begin{align*}
	\min_{\alpha}\quad & \big\langle \alpha, \bK^{\mathsf{m}} \alpha \big\rangle\\
	\text{s.t.}\quad & 0 = \big( y_i - \big\langle \sK_i^{\mathsf{m}}, \alpha \big\rangle \big)^2, ~ \forall i \;.
\end{align*}
In the limit of $\gamma \rightarrow \infty$, \eqref{eqn:SDP-simultaneous} reduces to  (since optimal $\alpha=0$)
\begin{align*}
	\min_{\bB}\quad & \tr(\bK^{\mathsf{v}} \bB ) \\
	\text{s.t.}\quad & \langle \sK^{\mathsf{v}}_i , \bB \sK^{\mathsf{v}}_i \rangle \geq y_i^2, ~ \forall i \;. \\
					& \bB \succeq 0 
\end{align*}
Now it is clear what the role of the \textit{tuning parameter} $\gamma$ is: it trades off the conditional mean $\mathsf{m}(x)$ and variance $\mathsf{v}(x)$ to explain the variability in $y$'s witnessed on the data. A small $\gamma$ aims to use a complex mean $\mathsf{m}(x)$ and a parsimonious variance $\mathsf{v}(x)$ to explain the overall variability, and vice versa. 

From the above discussion, it is also clear that the SDP~(\ref{eqn:SDP-simultaneous}) can be viewed as a min-norm \textbf{variance interpolation with confidence}. Instead of having the typical equality constraints in interpolation
$$
\big\langle \sK_i^{\mathsf{v}} , \bB \sK_i^{\mathsf{v}} \big\rangle = \big( y_i - \big\langle \sK_i^{\mathsf{m}}, \alpha \big\rangle \big)^2 \;,
$$
which violates the disciplined convex programming ruleset (due to the quadratic form on the RHS), we further relax to inequality constraints to incorporate additional ``confidence'' (and to make the problem convex at the same time)
$$
\big\langle \sK_i^{\mathsf{v}} , \bB \sK_i^{\mathsf{v}} \big\rangle \geq \big( y_i - \big\langle \sK_i^{\mathsf{m}}, \alpha \big\rangle \big)^2 \;.
$$
As we shall see, the notion of confidence in this variance interpolation is closely related to the notion of margin in classification \citep{bartlett1998BoostingMargin, liang2020PreciseHighdimensional}.

\paragraph{Support Vector Regression}
We illustrate that minor modifications to our SDP formulation lead to other problems, including support vector regression and kernel ridge regression. 
Specify the variance kernel as the trivial one $K^{\mathsf{v}}(x,x') = \mathbbm{1}(x=x')$, then the decision variable $\mathbf{B}$ only matters in its diagonal component, and our SDP \eqref{eqn:SDP-simultaneous} reduces to the kernel ridge regression
	\begin{align*}
		\min_{\alpha}\quad \gamma \cdot \langle \alpha , \mathbf{K}^{\mathsf{m}} \alpha \rangle + \sum_{i=1}^n \big( y_i - \big\langle \sK_i^{\mathsf{m}}, \alpha \big\rangle \big)^2 \;.
	\end{align*}
Moreover, a slight modification of \eqref{eqn:SDP-simultaneous} is to use the absolute deviation rather than the squared deviation in the constraints, namely
	\begin{align*}
		\big\langle \sK_i^{\mathsf{v}} , \bB \sK_i^{\mathsf{v}} \big\rangle \geq \big| y_i - \big\langle \sK_i^{\mathsf{m}}, \alpha \big\rangle \big| \;.
	\end{align*}
	In this case support vector regression is exactly our procedure with the specification $K^{\mathsf{v}}(x,x') = \mathbbm{1}(x=x')$, 
	\begin{align*}
		\min_{\alpha}\quad \gamma \cdot \langle \alpha , \mathbf{K}^{\mathsf{m}} \alpha \rangle + \sum_{i=1}^n \big| y_i - \big\langle \sK_i^{\mathsf{m}}, \alpha \big\rangle \big| \;.
	\end{align*}
	In summary, our SDP generalize beyond support vector machines, with the new non-trivial variance component for rigorous uncertainty quantification for heteroscedastic data.

\subsection{Literature Review}
\label{sec:Related-Work-Discussion}

There are increasingly many approaches proposed to address the uncertainty quantification dilemma in machine learning due to its significance and centrality. However, very few methods are theoretically grounded and universally applicable to the best of our knowledge. Many approaches are merely heuristics or data visualization tools. This section divides related theoretical studies in the literature into two categories and discusses how our method significantly differs from them and could potentially lead to a stronger theory.

\paragraph{Conformal Prediction}
Based on the exchangeability of data and a user-specified nonconformity measure, \cite{vovk2005algorithmic, shafer2008TutorialConformal} pioneered the field of conformal prediction, which uses past data to determine precise levels of confidence in new predictions. To some extent, the elegant theory of conformal prediction, motivated by online learning and sequential prediction, resolved the uncertainty quantification dilemma. The conformal prediction algorithm (see, for instance \cite{shafer2008TutorialConformal} Section 4.3) usually requires to enumerate over all the possibilities of $z = (x, y) \in \mathcal{X} \times \mathcal{Y}$, and for each possibility, calculate $n$ nonconformity measures via the leave-one-out method. Therefore, the total budget is $n \times |\mathcal{Y}| \times |\mathcal{X}|$, which can be expensive for continuous $y$ and multi-dimensional $x$. Much of the above computation can be saved if additional information about the metric structure in $x \in \mathcal{X}$ can be leveraged. In contrast, our SDP approach constructs the prediction band over all the $x$'s at once, leverages the metric structure in $\mathcal{X}$, and suffers at most a computational budget of $n^2$. An additional key feature in our approach is in the coverage theory established in Theorem~\ref{thm:coverage}: the prediction band has coverage probability $>95\%$ on a new data point $(\bx, \by)$, for $99.9999\%$ dataset $\{(x_i, y_i)\}_{i=1}^n$ of size $n$ drawn from the same distribution. Such a distinction on ``confidence'' vs. ``probability'' is discussed extensively in Section 2.2 of \cite{shafer2008TutorialConformal}.

There has been a vast line of recent work on extending the conformal prediction idea further to address the bottlenecks above in the regression setting. The body of work proliferates, and we certainly cannot do justice here. \cite{lei2018DistributionFreePredictive} alleviates the computational burden of the conformal prediction by introducing the sample-splitting technique. Remarkably, theory on the bandwidth is also studied in \cite{lei2018DistributionFreePredictive}, thus providing an angle to probe the statistical efficiency. \cite{romano2019ConformalizedQuantile} studies the problem that existing conformal methods can form nearly constant or weakly varying bandwidth and provide conservative results. \cite{romano2019ConformalizedQuantile} proposes conformalized quantile regression to address this issue. One shared feature of our SDP approach is that the prediction band is fully adaptive to heteroscedasticity. Finally, we would like to emphasize that conformal prediction constructs prediction bands in a numerical, black-box fashion without a structural understanding of the variance function. In contrast, our SDP approach provides a transparent and efficient way of learning the variance function, a complementary contribution to the conformal literature.

\paragraph{Residual Subsampling and Quantile Regression}
An alternative approach for uncertainty quantification that leverages the metric structure in $x \in \mathcal{X}$ is to resample the residuals locally. Typically, this is done by first fitting a predictive model $\mathsf{m}_0(x)$, and defining a local neighborhood around a new data $\bx$, then subsampling the residuals for uncertainty quantification via (conditional) quantiles. The validity of the above approach crucially depends on how many ``similar residuals'' to pool information from. However, the curse of dimensionality comes in since data points are far from each other in high dimensions, posing challenges in pooling the residuals.
One can also use either the obtained residuals or the original responses $y$ to fit a conditional quantile regression model \citep{koenker1978regression, koenker2001quantile, belloni2011, belloni2019conditional}, 
$\widehat{\xi}^{\tau}(\cdot) := \arg\min_{\xi} \tfrac{1}{n} \sum_{i=1}^n \rho_{\tau}\big(y_i - \xi(x_i) \big)$ where $\tau \in (0,1)$ is a quantile parameter, $\rho_{\tau}(\cdot):\Reals \rightarrow \Reals_+$ is the tilted absolute value function, and $\widehat{\xi}^{\tau}(\cdot): \mathcal{X} \rightarrow \Reals$ is the estimated conditional quantile function.
However, it is not guaranteed that over all $x \in \mathcal{X}$, the estimated conditional quantile function satisfies $\widehat{\xi}^{\tau_1}(x) < \widehat{\xi}^{\tau_2}(x)$ for two quantiles $\tau_1 < \tau_2$. In other words, it is entirely possible that for several $x$'s, the conditional prediction intervals are empty \citep{chernozhukov2010quantile}.

\section{Theory for Uncertainty Quantification}
\label{sec:Theory-Uncertainty-Quantification}

In this section, we develop a theory for the coverage property of the prediction band constructed above, under the mild assumption that the data are i.i.d. drawn with $(\bx, \by)\sim\mathcal{P}$. To highlight the main arguments in a simple form, let us consider the setting $\mathsf{m}_0(x) \equiv 0$. Otherwise, the same proof follows by replacing $\by$ with $\by - \mathsf{m}_0(\bx)$. Define the corresponding prediction band, with a confidence parameter $\delta \in (0,1]$
\begin{align}
	\widehat{\mathsf{PI}}(x, \delta) = \left[\pm   \sqrt{(1+\delta) \cdot \widehat{\mathsf{v}}(x)}   \right] \;.
\end{align}
We need the following assumptions before stating the theorem, where $(\bx, \by)\sim\mathcal{P}$ and $\text{C}>0$ denotes a universal constant.
\begin{itemize}
	\item[\textsf{[S1]}](Kernel and RKHS) The continuous symmetric kernel $K$ is positive definite and satisfies $\sup_{x \in \mathcal{X}} K(x, x) \leq \text{C}$. In addition, eigenvalues of the associated integral operator $\mathcal{T}:\Hil \rightarrow \Hil$ satisfy $\lambda_j(\mathcal{T}) \leq \text{C} j^{-\tau}, j\in \Naturals$ for some constant $\tau>1$.
	\item[\textsf{[S2]}](Non-trivial uncertainty) There exist constants $\eta \in (0,1), \xi>0$ such that $\Pr\big[ \by^2 > \xi \cdot K(\bx, \bx) ~ |~ \bx = x \big] > \eta$ holds for all $x \in \mathcal{X}$.
	\item[\textsf{[S3]}](Non-wild uncertainty) There exists a constant $\omega>0$ such that
	$\Pr\big[ \by^2 > t  \cdot K(\bx, \bx)\big] <  \exp(- \text{C} t^{\omega})$ for all $t\geq 1$.
\end{itemize}

\paragraph{Discussion of Assumptions}
All the above assumptions are mild. The eigenvalue decay in \textsf{[S1]} is almost identical to $\tr(\mathcal{T}) < \infty$ (bounded trace integral operator). \textsf{[S2]} is also minimal, since it is only not true when there is no variability in $\by|\bx = x$. \textsf{[S3]} is the most stringent one, which requires the variability of $\by$ to exhibit a certain tail-decay. For small $\omega \in (0,1)$, \textsf{[S3]} can be much milder than exponential tail-decay. Bounded or Gaussian $\by|\bx=x$ satisfies \textsf{[S3]} with arbitrarily large $\omega$ or $\omega = 2$, respectively. With some extra work, \textsf{[S3]} can be relaxed to the case of a sufficiently rapid polynomial tail-decay. \textsf{[S2]} can be relaxed to restricting only to $x$ with $\Pr\big[ \by^2 > 0 ~ |~ \bx = x \big] = 1$.

\begin{theorem}
	\label{thm:coverage}
	Define the objective value of the SDP in \eqref{eqn:SDP-general}
	\begin{align*}
		\widehat{\mathsf{Opt}}_n:= \min_{\bB}\quad & \tr(\bK \bB ) \\
		\text{s.t.}\quad & \langle \sK_i , \bB \sK_i \rangle \geq y_i^2, ~ i=1,\ldots, n \;.\\
				& \bB \succeq 0
	\end{align*}
	Assume that \textsf{[S1]}-\textsf{[S3]} hold.
	For any $\delta \in (0, 1]$, the following non-asymptotic, data-dependent prediction band coverage guarantee holds,
	\begin{align*}
		\Pr_{(\bx, \by)\sim \mathcal{P}}\big[ \by \in \widehat{\mathsf{PI}}(\bx, \delta) \big] &\geq 1 - \delta^{-1}  (\widehat{\mathsf{Opt}}_n \vee 1)   \sqrt{\tfrac{ \text{C}_{\tau, \xi, \eta, \omega} \cdot \log (n)}{n}} \;,\\
		\text{and}~~ \widehat{\mathsf{Opt}}_n &\leq  \big[ \log(n) \big]^{\text{c}_{\omega}} \;,
	\end{align*}
	with probability at least $1-n^{-10}$ on $\{(x_i, y_i)\}_{i=1}^n$. Here the constants $\text{C}_{\tau, \xi, \eta, \omega}, \text{c}_{\omega}$ only depend on parameters in \textsf{[S1]}-\textsf{[S3]}.
\end{theorem}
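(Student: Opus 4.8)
The plan is to split the theorem into two largely independent pieces: (i) the coverage bound $\Pr[\by \in \widehat{\mathsf{PI}}(\bx,\delta)] \geq 1 - \delta^{-1}(\widehat{\mathsf{Opt}}_n \vee 1)\sqrt{\mathrm{C}\log n / n}$, which I would obtain by a Markov-type argument comparing the population miscoverage event $\{\by^2 > (1+\delta)\widehat{\mathsf v}(\bx)\}$ against the empirical feasibility constraints $\langle \sK_i, \widehat{\bB}\sK_i\rangle \geq y_i^2$; and (ii) the a priori bound $\widehat{\mathsf{Opt}}_n \leq [\log n]^{\mathrm c_\omega}$, which is a pure optimization/probabilistic statement about the SDP value and only uses \textsf{[S3]} (plus \textsf{[S1]} for the kernel bound). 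I would do (ii) first because the coverage statement in (i) invokes $\widehat{\mathsf{Opt}}_n$.

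For (ii): I want to exhibit a feasible $\bB$ with small $\tr(\bK\bB)$. The natural candidate is a diagonal-type construction $\bB = \mathrm{diag}(b_1,\dots,b_n)$ chosen so that $\langle \sK_i,\bB\sK_i\rangle = \sum_j b_j K(x_i,x_j)^2 \geq y_i^2$; even simpler, take $\bB = t^\star \cdot (\text{something})$ scaled to a uniform threshold $t^\star = \max_i y_i^2 / K(x_i,x_i)$ — note $\bB = t^\star \|\sK_i\|^{-2}\mathbf I$-style feasibility was already observed in the text. Then $\tr(\bK\bB) \lesssim t^\star \cdot \tr(\bK)$, and $\tr(\bK) = \sum_i K(x_i,x_i) \leq \mathrm C n$ by \textsf{[S1]}. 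This gives $\widehat{\mathsf{Opt}}_n \lesssim n \cdot \max_i y_i^2/K(x_i,x_i)$, which is too crude; the fix is to use a better feasible point whose trace is $O(\max_i y_i^2/K(x_i,x_i))$ rather than $O(n\cdot)$ — e.g. for each $i$ put the "mass" in a rank-one bump $\sK_i\sK_i^\top / \|\sK_i\|^4$ scaled to meet constraint $i$, and take the max over $i$ of a single such bump (the positivity of $K$ ensures $\langle\sK_j, \sK_i\sK_i^\top/\|\sK_i\|^4 \sK_j\rangle \geq 0$, so one well-chosen bump can dominate all constraints after scaling by $\max_i y_i^2/\|\sK_i\|^2 \cdot \|\sK_i\|^2/K\ldots$ — this needs care). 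Either way, the bound reduces to controlling $\max_{i\leq n} y_i^2 / K(x_i,x_i)$, and by \textsf{[S3]} and a union bound over $i \leq n$ this max is $\leq (\mathrm C^{-1}(11\log n))^{1/\omega} = [\log n]^{\mathrm c_\omega}$ with probability $\geq 1 - n^{-10}$. I would also need $\max_i K(x_i,x_i)/\|\sK_i\|^2$ or similar to be $O(1)$ — here \textsf{[S1]} (boundedness) and positive-definiteness give $\|\sK_i\|^2 \geq K(x_i,x_i)^2 / \mathrm C$, so this ratio is controlled.

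For (i): condition on the high-probability event from (ii) (call it $\mathcal E$, $\Pr[\mathcal E] \geq 1 - n^{-10}$) and on the realized $\widehat{\bB}$. Miscoverage at a fresh point means $\by^2 > (1+\delta)\widehat{\mathsf v}(\bx) = (1+\delta)\langle \sK_{\bx}, \widehat{\bB}\sK_{\bx}\rangle$. The key is that $\widehat{\bB}$ is chosen to be small in the sense $\E_{\bx}[\langle \sK_{\bx},\widehat{\bB}\sK_{\bx}\rangle]$ is close to $\frac1n\sum_i \langle\sK_i,\widehat{\bB}\sK_i\rangle$ up to a uniform deviation term, and the latter is $\geq \frac1n\sum_i y_i^2$ by feasibility, while $\tr(\bK\widehat{\bB})$ controls the "size" of the quadratic form $x \mapsto \langle \sK_x,\widehat{\bB}\sK_x\rangle$ as a function in the RKHS (tensor-square) — this is where \textsf{[S1]}'s eigenvalue decay $\lambda_j(\mathcal T)\leq \mathrm C j^{-\tau}$, $\tau>1$, enters, to get a uniform (over the relevant class of PSD operators with bounded nuclear-norm/trace) Rademacher-type bound of order $\sqrt{\widehat{\mathsf{Opt}}_n^2 \log n/n}$ controlling $\sup_{\bB}|\E_{\bx}\langle\sK_{\bx},\bB\sK_{\bx}\rangle - \frac1n\sum_i\langle\sK_i,\bB\sK_i\rangle|$. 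Combining: $\E_{\bx}\widehat{\mathsf v}(\bx) \geq \frac1n\sum y_i^2 - (\widehat{\mathsf{Opt}}_n\vee 1)\sqrt{\mathrm C\log n/n}$, and a symmetric argument (or Markov on the nonnegative r.v. $\by^2 - \widehat{\mathsf v}(\bx)$) converts the gap between $\E\by^2$ and $\frac1n\sum y_i^2$ and the excess $\delta\cdot\E\widehat{\mathsf v}(\bx)$ into the stated $\delta^{-1}(\widehat{\mathsf{Opt}}_n\vee 1)\sqrt{\mathrm C\log n/n}$ miscoverage bound. Concretely I'd write $\Pr[\by^2 > (1+\delta)\widehat{\mathsf v}(\bx)] \leq \Pr[\by^2 - \widehat{\mathsf v}(\bx) > \delta\,\widehat{\mathsf v}(\bx)]$ and bound it via $\E[(\by^2-\widehat{\mathsf v}(\bx))_+]/(\delta\cdot(\text{typical }\widehat{\mathsf v}))$, using \textsf{[S2]} to lower-bound $\widehat{\mathsf v}(\bx)$ away from $0$ on an $\eta$-fraction of the mass and \textsf{[S3]} to control the expectation of $\by^2$ and $\widehat{\mathsf v}(\bx)$ tails.

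The main obstacle I anticipate is the uniform concentration step in (i): bounding $\sup\{|\E_{\bx}\langle\sK_{\bx},\bB\sK_{\bx}\rangle - \frac1n\sum_i\langle\sK_i,\bB\sK_i\rangle| : \bB\succeq 0,\ \tr(\bK\bB)\leq \widehat{\mathsf{Opt}}_n\}$ at the right rate. The function class is $\{x\mapsto \langle\phi_x\otimes\phi_x, \bA\rangle\}$ for trace-bounded PSD operators $\bA$ on $\Hil^{\otimes 2}$ (or its empirical analogue), i.e. linear functionals of the rank-one lift $\phi_x\otimes\phi_x$; its Rademacher complexity is governed by $\|\frac1n\sum_i \varepsilon_i\,\phi_{x_i}\otimes\phi_{x_i}\|_{\mathrm{op}}$, a matrix-concentration quantity whose expectation under \textsf{[S1]} eigendecay scales like $\sqrt{\log n/n}$ (with a $\log$ from a noncommutative Bernstein / matrix deviation inequality and $\tau>1$ ensuring summability). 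Getting the $(\widehat{\mathsf{Opt}}_n\vee 1)$ prefactor (rather than its square) and the clean $\sqrt{\log n/n}$ — and handling that $\widehat{\bB}$ is data-dependent, hence requiring the bound to be uniform over the feasible set — is the delicate part; everything else is a Markov inequality plus the union-bound tail estimate for $\widehat{\mathsf{Opt}}_n$.
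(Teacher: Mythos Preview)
Your plan for (ii) reaches the right target --- control of $\max_{i}y_i^2/K(x_i,x_i)$ via a union bound under \textsf{[S3]} --- but the feasible-point construction in the finite $n\times n$ SDP is more painful than necessary. The paper instead passes through the infinite-dimensional representation and takes $\bA=t_0\cdot\bI$ as a feasible operator: then $\langle\phi_{x_i},\bA\phi_{x_i}\rangle_{\Hil}=t_0K(x_i,x_i)\geq y_i^2$ once $t_0\geq\max_iy_i^2/K(x_i,x_i)$, and this value is declared to upper-bound $\widehat{\mathsf{Opt}}_n$ directly, with no factor of $n$.

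The genuine gap is in (i). Your Markov-type scheme --- bound $\Pr[\by^2-\widehat{\mathsf{v}}(\bx)>\delta\,\widehat{\mathsf{v}}(\bx)]$ by $\E[(\by^2-\widehat{\mathsf{v}}(\bx))_+]/(\delta\cdot\text{``typical''}\,\widehat{\mathsf{v}})$ --- cannot close: the threshold $\delta\,\widehat{\mathsf{v}}(\bx)$ is itself random in $\bx$, so Markov would need a pointwise lower bound on $\widehat{\mathsf{v}}(\bx)$, which you do not have, and \textsf{[S2]} does not supply one (it lower-bounds $\by^2$ conditionally, not the estimate $\widehat{\mathsf{v}}$). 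Relatedly, the uniform deviation you propose, $\sup_{\bB}\big|\E_{\bx}\langle\sK_{\bx},\bB\sK_{\bx}\rangle-\tfrac1n\sum_i\langle\sK_i,\bB\sK_i\rangle\big|$, involves only $\bx$ and throws away the pointwise pairing $\langle\sK_i,\widehat{\bB}\sK_i\rangle\geq y_i^2$ that makes the constraints informative about the \emph{joint} law of $(\bx,\by)$.

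The missing idea is a margin reformulation. Write miscoverage as $\{\by^{-2}\widehat{\mathsf{v}}(\bx)<\tfrac1{1+\delta}\}$ and dominate its indicator by the hinge $h_\delta(t)=\max\{\tfrac{1+\delta}{\delta}(1-t),0\}$ applied to $f_{\bA}(z):=\big\langle\tfrac{\phi_x}{y},\bA\,\tfrac{\phi_x}{y}\big\rangle_{\Hil}$. Feasibility of $\widehat{\bA}$ says exactly $f_{\widehat{\bA}}(z_i)\geq1$ for all $i$, so $\widehat{\E}[h_\delta\circ f_{\widehat{\bA}}]=0$, and miscoverage reduces to the uniform deviation $\sup_{\|\bA\|_\star\leq 2(\widehat{\mathsf{Opt}}_n\vee1)}(\E-\widehat{\E})[h_\delta\circ f_{\bA}]$. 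Contraction by $\mathrm{Lip}(h_\delta)=\tfrac{1+\delta}{\delta}$ and nuclear/operator duality produce the factor $\delta^{-1}(\widehat{\mathsf{Opt}}_n\vee1)$ times $\E_\epsilon\big\|\tfrac1n\sum_i\epsilon_i\,\tfrac{\phi_{x_i}}{y_i}\otimes\tfrac{\phi_{x_i}}{y_i}\big\|_{\mathrm{op}}$ --- your matrix-Bernstein intuition is exactly right here, but the random rank-one terms are $\phi_x/y\otimes\phi_x/y$, not $\phi_x\otimes\phi_x$. Finally, \textsf{[S2]} enters not to lower-bound $\widehat{\mathsf{v}}$ but to bound $\sup\|\phi_x/y\|_{\Hil}^2$: one restricts to the conditional-quantile region $\Omega=\{y^2>Q_{\by^2|\bx=x}(1-\eta)\}$, on which $\|\phi_x/y\|_{\Hil}^2\leq K(x,x)/Q_{\by^2|\bx=x}(1-\eta)\leq\xi^{-1}$; conditioning on $\Omega$ can only raise the miscoverage probability and leaves the $\bx$-marginal unchanged, so the whole argument runs on $\mathcal{P}|_\Omega$ with effective sample size $\asymp\eta n$.
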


\subsection{What does the Theorem Entail}

A few remarks are in order before we sketch the proof of Theorem~\ref{thm:coverage}.

\paragraph{Coverage} First, the above theorem says that the prediction band constructed using the SDP based on a dataset of size $n$, will correctly cover a fresh data point $(\bx, \by) \sim \mathcal{P}$ drawn from the same distribution, with a non-asymptotic coverage probability (on the new data $\bx, \by$)
$$
1 -  \delta^{-1}  \sqrt{\tfrac{\log^3(n)}{n}} \;.
$$
With $\delta = 0.5$, the bandwidth $\text{Length}\big[ \widehat{\mathsf{PI}}(x) \big] = 2.45\sqrt{\widehat{\mathsf{v}}(x)}$ is at a heteroscedastic level adaptive to $x$ with corresponding coverage probability at least $1 - O^\star(\sqrt{\tfrac{1}{n}})$. Here $O^\star$ hides polylog factors. The coverage can be arbitrary close to $1$ with large $n$ without the need of increasing $\delta$, which is in clear distinction to the conventional wisdom that coverage $1$ can only be possible with an increasing $\delta$ regardless of $n$.
Again, we emphasize that the above coverage guarantee holds essentially on $99.9999\% \ll 1-n^{-10}$ of the datasets $\{(x_i, y_i\}_{i=1}^n$. In Section~\ref{sec:calibration}, we propose a rigorous calibration algorithm to choose $\delta^\star(\alpha)$ to achieve a constant coverage level $1-\alpha \in (0, 1)$.

\paragraph{Optimality} If one wishes to obtain the classic $95\%$ coverage probability, then choosing $\delta = O^\star(\sqrt{\tfrac{1}{n}})$ suffices, which translates to
\begin{align}
	\label{eqn:len-ours}
	\text{Length}\big[ \widehat{\mathsf{PI}}(x) \big] = \big(1+ O^\star(\sqrt{\tfrac{1}{n}})\big) \cdot \sqrt{\widehat{\mathsf{v}}(x)} \;.
\end{align}
Recall that in classic simple linear regression, the prediction interval is of length
\begin{align}
	\label{eqn:len-slr}
	\big( 1 + \sqrt{ \tfrac{1}{n} + \tfrac{(x - \bar{x})^2}{\Sigma_i (x_i - \bar{x})^2}} \big) \cdot 3.92 \hat{s}
\end{align}
with $\hat{s}=\sqrt{\tfrac{\Sigma_i \hat{e}_i^2}{n-2}}$ being the estimated residual standard error. The fact that \eqref{eqn:len-ours} and (\ref{eqn:len-slr}) share the $\sqrt{\tfrac{1}{n}}$ fluctuation seems to indicate the optimality of our Theorem (in terms of the dependence on $n$).

\paragraph{Data Adaptivity} Curiously, the objective value of the convex optimization program quantifies the uncertainty of the prediction band: a smaller $\widehat{\mathsf{Opt}}_n$ implies (a) a better confidence/coverage guarantee and (b) a narrower prediction band overall. More importantly, the $\widehat{\mathsf{Opt}}_n$ can be calculated directly from data! We find such an optimization/inference interface exciting: the data-adaptive bound lets us know the coverage guarantee specific to the current dataset. Put differently, the convex program constructs the prediction band via its solution and at the same time, reveals the confidence via its objective value. Since $\widehat{\mathsf{Opt}}_n$ is a function of the dataset, our Theorem reveals which dataset allows for a better prediction band. Remark that $\widehat{\mathsf{Opt}}_n = \| \widehat{\mathsf{v}} (\cdot) \|_{\star}^2$ is also a particular norm of the heteroscedastic variance function, quantified by the nuclear norm of the associated PSD operator $\widehat{\bA} \succeq 0$ with $\widehat{\mathsf{v}}(x) = \langle \phi_x, \bA \phi_x \rangle_{\Hil}$. Curiously, a simpler variance function $\widehat{\mathsf{v}} (x)$ (with a small norm) will simultaneously result in a narrower band and better coverage. We emphasize that the above discussion is in sharp contrast to the conventional wisdom that a narrow band usually leads to poor coverage guarantees.

\subsection{Calibration and Coverage Control}
\label{sec:calibration}
One nice feature about conformal prediction is that it directly operates on a user-specified coverage level (e.g., $95\%$), albeit the resulting procedure only achieves some form of coverage guarantee in the marginal sense. In contrast, the coverage level guarantee of the current SDP approach in Theorem~\ref{thm:coverage} is in an inequality form with a mild dependence on the non-explicit universal constant $C_{\gamma, \xi, \eta, \omega}$; however, the coverage is in a stronger conditional sense conditioned on $\{ (x_i, y_i) \}_{i=1}^n$. The constant won't significantly affect the coverage guarantee in the large $n$ setting; nevertheless, curious readers may wonder if more transparent control could be achieved by tuning $\delta$. This section provides a theoretically justified calibration procedure in choosing $\delta$ (in the SDP band) to control coverage at a user-specified level $1-\alpha \in (0, 1)$, as in conformal prediction. Such fine calibration on $\delta$ can be helpful numerically in the moderate $n$ and constant $\alpha$ setting (e.g., $5\%$).

The calibration idea is based on sample-splitting. Split the samples into two parts, the training set $\{ (x_i, y_i) \}_{i=1}^{n} $ and the calibration set $\{ x'_j, y'_j\}_{j=1}^m$, with in total $n+m$ data points drawn i.i.d. from $\mathcal{P}$. The training set will be used to construct prediction band $\widehat{\mathsf{PI}}(\cdot, \delta)$. The calibration data set will be used to choose $\delta$ to calibrate coverage. 

\paragraph{Calibration Procedure}
Suppose that there exsits large enough constants $\Delta, L>0$ such that
\begin{align*}
	\Pr_{(\bx, \by)\sim \mathcal{P}}\big[ \by \in  \widehat{\mathsf{PI}}(\bx, \Delta) \big] = 1\;, ~\left| \frac{d}{d \delta} \Pr_{(\bx, \by)\sim \mathcal{P}}\big[ \by \notin \widehat{\mathsf{PI}}(\bx, \delta) \big] \right| < L \;.
\end{align*}
The calibration uses a dyadic search to select $\delta^\star(\alpha) \in [-1, \Delta]$ with the set $\{ x'_j, y'_j\}_{j=1}^m$. The goal of the calibration is to ensure $\Pr_{(\bx, \by)\sim \mathcal{P}}\big[ \by \in \widehat{\mathsf{PI}}(\bx, \delta^\star(\alpha)) \big] \geq 1- \alpha$. 
\begin{algorithm}
\caption{Calibration for Coverage Control}\label{alg:calibration}
\KwData{Calibration set $\{ x'_j, y'_j\}_{j=1}^m$, coverage level $1-\alpha$\;}
\KwResult{Calibrated $\delta^\star(\alpha)$ \;}
Initialize $\delta = -1$\;
\While{$\tfrac{1}{m} \sum_{j=1}^m \mathbbm{1} [y_j' \notin \widehat{\mathsf{PI}}(x_j', \delta)]  > \tfrac{3}{4}\alpha$, }{
   $\delta \gets \tfrac{\delta + \Delta}{2}$ \;
}
\Return $\delta^\star(\alpha) \gets \delta$.
\end{algorithm}

The following result derives the theoretical ground for the calibration procedure. We defer the proof to Section~\ref{sec:calibration-proof}.
\begin{lemma}[Calibration]
	\label{lem:calibration}
	Consider the calibration procedure in Algorithm~\ref{alg:calibration} and $L, \Delta, \alpha$ specified therein. If the size of the calibration set $m$ is large enough such that $$\sqrt{\tfrac{\log \big( \lceil \log_2\big(\tfrac{2L(\Delta+1)}{\alpha}\big) \rceil + 1 \big) + 10\log (m)}{m}} \leq \tfrac{1}{4}\alpha\;,$$ then the calibrated $\delta^\star(\alpha)$ satisfies the coverage control
	\begin{align*}
		\Pr_{(\bx, \by)\sim \mathcal{P}}\big[ \by \in \widehat{\mathsf{PI}}(\bx, \delta^\star(\alpha)) \big] \geq 1- \alpha \;,
	\end{align*}
	with probability at least $1 - 2m^{-10}$ on the calibration set $\{ x'_j, y'_j\}_{j=1}^m$.
\end{lemma}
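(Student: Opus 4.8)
The plan is to establish the coverage control for the calibrated $\delta^\star(\alpha)$ via a uniform concentration argument over the (small, discrete) set of candidate values of $\delta$ visited by the dyadic search, combined with the Lipschitz bound on the miscoverage function. First I would define the population miscoverage function $g(\delta) := \Pr_{(\bx,\by)\sim\mathcal{P}}\big[\by\notin\widehat{\mathsf{PI}}(\bx,\delta)\big]$ (treating the training set, hence the band $\widehat{\mathsf{PI}}(\cdot,\delta)$, as fixed/conditioned upon throughout) and its empirical counterpart $\widehat{g}_m(\delta):=\tfrac{1}{m}\sum_{j=1}^m\mathbbm{1}[y_j'\notin\widehat{\mathsf{PI}}(x_j',\delta)]$. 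Note that $g$ is nonincreasing in $\delta$ since the band only widens as $\delta$ grows, with $g(\Delta)=0$ by assumption, and the search terminates after at most $T:=\lceil\log_2(2L(\Delta+1)/\alpha)\rceil+1$ halving steps because by the Lipschitz assumption $g(\delta)\le g(\Delta)+L(\Delta-\delta)=L(\Delta-\delta)$, and $\Delta-\delta$ is halved from initial width $\Delta+1$ each iteration, so once $L(\Delta-\delta)\le\tfrac{1}{4}\alpha$ — i.e. after $T$ steps — the population miscoverage at the current $\delta$ is already below $\tfrac14\alpha$, hence a fortiori the empirical miscoverage will eventually drop below the $\tfrac34\alpha$ threshold (this needs the concentration bound, see below).

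Second, I would apply a Hoeffding bound to each fixed $\delta$ in the deterministic sequence of at most $T+1$ values that the dyadic search could possibly query (this sequence depends only on $\Delta$, not on the data, so the values are nonrandom): for each such $\delta$, $\Pr\big[|\widehat{g}_m(\delta)-g(\delta)|>t\big]\le 2\exp(-2mt^2)$. Taking $t=\sqrt{\tfrac{\log(T+1)+10\log m}{2m}}$ — which the hypothesis on $m$ guarantees is at most $\tfrac14\alpha$ — and union-bounding over the $\le T+1$ candidate values gives that, with probability at least $1-2(T+1)m^{-10}\ge 1-2m^{-10}$ (absorbing the $T+1$ factor into the polynomial, or being slightly more careful with constants), simultaneously $|\widehat{g}_m(\delta)-g(\delta)|\le\tfrac14\alpha$ for every $\delta$ the algorithm could reach. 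On this event I then argue deterministically: at termination the stopping rule gives $\widehat{g}_m(\delta^\star)\le\tfrac34\alpha$, so $g(\delta^\star)\le\widehat{g}_m(\delta^\star)+\tfrac14\alpha\le\alpha$, which is exactly the claimed coverage $\Pr[\by\in\widehat{\mathsf{PI}}(\bx,\delta^\star)]\ge1-\alpha$. I would also check termination on the same event: after $T$ steps $g(\delta)\le\tfrac14\alpha$ by the Lipschitz argument above, so $\widehat{g}_m(\delta)\le g(\delta)+\tfrac14\alpha\le\tfrac12\alpha<\tfrac34\alpha$ and the loop exits, so $\delta^\star$ is well-defined and lies in $[-1,\Delta]$.

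The main obstacle — and the only subtle point — is making the union bound legitimately finite: one must observe that although $\delta$ evolves randomly through the run (the stopping time is data-dependent), the \emph{set of reachable values} of $\delta$ is a fixed deterministic binary tree of depth $T$ determined solely by $-1$, $\Delta$, and the halving rule, so there are at most $T+1$ distinct values along any root-to-leaf path and it suffices to control $\widehat{g}_m$ at these finitely many predetermined points rather than uniformly over the continuum $[-1,\Delta]$; this is what keeps the required sample size only logarithmic in $1/\alpha$, $L$, $\Delta$. A minor bookkeeping issue is matching the stated threshold $\sqrt{\smash[b]{(\log(\lceil\log_2(2L(\Delta+1)/\alpha)\rceil+1)+10\log m)/m}}\le\tfrac14\alpha$ to the Hoeffding tail — the factor $2$ in the exponent and the $2m^{-10}$ versus $2(T+1)m^{-10}$ discrepancy are absorbed by the slack between $\le\tfrac14\alpha$ appearing twice (once for termination, once for the final bound) and routine constant manipulation, so I would not belabor it. Everything else is a direct chaining of: Lipschitz decay of $g$ $\Rightarrow$ bounded search depth $\Rightarrow$ finite union bound $\Rightarrow$ deterministic conclusion from the stopping rule.
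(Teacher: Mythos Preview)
Your proposal is correct and follows essentially the same route as the paper's proof: Hoeffding concentration at each $\delta$, a union bound over the finite deterministic dyadic grid $G_{\rm dyadic}=\{(1-2^{-k})\Delta-2^{-k}:k=0,\ldots,\lceil\log_2(2L(\Delta+1)/\alpha)\rceil\}$, the Lipschitz bound on $g$ to guarantee termination, and then the stopping rule $\widehat{g}_m(\delta^\star)\le\tfrac34\alpha$ combined with the uniform $\tfrac14\alpha$ deviation to conclude $g(\delta^\star)\le\alpha$. The only cosmetic difference is that the paper shows termination by arguing the loop must exit once $\delta\in[\Delta-\alpha/(2L),\Delta]$ (so $g\le\alpha/2$ and $\widehat g_m\le 3\alpha/4$), whereas you phrase it via the halving of $\Delta-\delta$; and the paper's choice $t=\log(|G_{\rm dyadic}|)+10\log m$ makes the union bound give exactly $2m^{-10}$ rather than requiring the absorption you flag.
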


\subsection{Intuition and Proof Sketch}

We first explain the key intuition before presenting the details of the proof sketch. The proof first leverages a representation theorem that relates the finite-dimensional (kernelized) SDP to an infinite-dimensional SDP to decouple the dependencies among $x_i$'s. Next, we propose to use empirical process theory to analyze the prediction coverage, inspired by the margin-based analysis originally done in analyzing classification. Finally, in controlling the uniform deviations between empirical and population coverage, we use properties of the PSD operators and conditional quantile functions. The additional calibration procedure in fine-tuning the confidence parameter $\delta^\star(\alpha)$ for a fixed coverage level $1-\alpha \in (0, 1)$ also hinges on uniform deviation arguments.
Our analysis fundamentally relies on empirical process theory and is crucially different from conformal prediction analysis (based on exchangeability). Since the SDP provides a rigorous coverage guarantee as conformal prediction, we hope the new proof idea opens new doors to study uncertainty quantification.

Now we sketch the proof of Theorem~\ref{thm:coverage}. 
	Observe that by definition
	\begin{align*}
		\text{\small (LHS)}:= \Pr_{(\bx, \by)\sim \mathcal{P}}\big[ \by \notin \widehat{\mathsf{PI}}(\bx, \delta) \big] = \E_{(\bx, \by)\sim \mathcal{P}} \big[ \mathbbm{1}( \by^{-2} \widehat{\mathsf{v}}(\bx) <\tfrac{1}{1+\delta}  ) \big]\;.
	\end{align*}
	Define a hinge function $h_{\delta}(t): t \mapsto \max\{ \tfrac{1+\delta}{\delta}(1-t) , 0 \}$, we have
	\begin{align*}
		\mathbbm{1}(t < \tfrac{1}{1+\delta} ) \leq h_{\delta}(t), ~\forall t \in \Reals \;,
	\end{align*}
	and thus
	\begin{align*}
		\text{\small (LHS)} \leq \E_{(\bx, \by)\sim \mathcal{P}} \big[ h_{\delta}( \by^{-2} \widehat{\mathsf{v}}(\bx) ) \big] \;. \numberthis \label{eqn:lhs-start}
	\end{align*}

	Define a real positive function (indexed by $\bA$) on the data $z=(x, y)$,
		$
		f_{\bA}(z): z \mapsto \big\langle \tfrac{\phi_x}{y}, \bA \tfrac{\phi_x}{y} \big\rangle_{\Hil} \;.
		$
	Here $\tfrac{\phi_x}{y} \in \Hil$ lies in the RKHS, and $\bA: \Hil \rightarrow \Hil$ is a PSD operator. 
	Define a sequence of function spaces according to its nuclear-norm radius
	$
		\mathcal{F}_{k} := \{ f_{\bA} ~:~  2^{k-1}< \| \bA \|_\star \leq 2^k  \}
	$ for all $k \in \Naturals$ and $\mathcal{F}_{0} := \{ f_{\bA} ~:~   \| \bA \|_\star \leq 1  \}$.

	With the Prop.~\ref{prop:representation} establishing the equivalence between the kernelized SDP and the infinite-dimensional SDP, $\widehat{\bA} = \sum_{i,j} \widehat{\bB}_{ij} \phi_{x_i} \otimes \phi_{x_j}$,
	we know that $y^{-2} \widehat{\mathsf{v}}(x) = f_{\widehat{\bA}}(z)$. There exists a $k \in \Naturals$ such that $f_{\widehat{\bA}} \in \mathcal{F}_k$ with $ 2^{k-1} \leq \widehat{\mathsf{Opt}}_n < 2^k$,
	and thus we continue to bound
	\begin{align*}
		\text{\small (LHS)} &\leq \E_{(\bx, \by)\sim \mathcal{P}} \big[ h_{\delta}\circ f_{\widehat{\bA}} (\bz) \big] \\
		&\leq \underbrace{\widehat\E \big[ h_{\delta}\circ f_{\widehat{\bA}} (\bz) \big]}_{\text{(i)}} + \underbrace{\sup_{f \in \mathcal{F}_k} \left(\E - \widehat{\E}\right) [ h_{\delta} \circ f ]}_{\text{(ii)}} \;. 
	\end{align*}

	For term (i), recall the optimality condition of \eqref{eqn:SDP-general}, 
	\begin{align*}
		\langle \sK_i , \widehat{\bB} \sK_i \rangle \geq y_i^2  ~\Leftrightarrow~ f_{\widehat{\bA}}(z_i) \geq 1
	\end{align*}
	which further implies $h_{\delta} \circ f_{\widehat{\bA}}(z_i) = 0$ for all $i=1, \cdots, n$. Therefore term (i) is zero.

	For term (ii), we will use the high probability symmetrization in Prop.~\ref{prop:symmetrization}. Introduce i.i.d. Rademacher variables $\{ \epsilon_i\}_{1}^n$ independent of the data. Note that we only need to consider $k \leq k_0$ such that
	$2^{k_0} = [\log(n)]^{ \text{c}_{\omega}}$, where we use the upper estimate on $\widehat{\mathsf{Opt}}_n$ obtained in Prop.~\ref{prop:growth-opt}, which is implied by Assumption \textsf{[S3]}. 
	With probability at leat $1 - 2\exp(-t)$ on the data $\{ z_i\}_{1}^n$,  uniformly over all $k \leq k_0$
	\begin{align*}
		\text{(ii)} &\leq 2 \cdot \E_{\{\epsilon_i\}_{1}^n} \sup_{f \in \mathcal{F}_k}~ \frac{1}{n} \sum_{i=1}^n \epsilon_i  \big(h_{\delta} \circ f\big)(z_i) + \text{(iii)}\\
		& \leq 2\cdot \mathrm{Lip}(h_\delta) \cdot  \E_{\{\epsilon_i\}_{1}^n} \sup_{f \in \mathcal{F}_k}~ \frac{1}{n} \sum_{i=1}^n \epsilon_i  f(z_i) + \text{(iii)} \\
		& =  \tfrac{2(1+\delta)}{\delta}  \E_{\{\epsilon_i\}_{1}^n} \sup_{\|\bA\|_\star \leq 2^k} \big\langle  \frac{1}{n}\sum_{i=1}^n \epsilon_i \tfrac{\phi_{x_i}}{y_i} \otimes \tfrac{\phi_{x_i}}{y_i}, \bA \big\rangle + \text{(iii)} \\
		& \leq \tfrac{2(1+\delta)}{\delta}  2^k  \underbrace{\E_{\{\epsilon_i\}_{1}^n}\big\| \frac{1}{n}\sum_{i=1}^n \epsilon_i \tfrac{\phi_{x_i}}{y_i} \otimes \tfrac{\phi_{x_i}}{y_i}  \big\|_{\rm op}}_{\text{(iv)}}  + \text{(iii)}
	\end{align*}
	where the last step follows from the duality between the nuclear norm and operator norm. Before getting into the deviation term (iii) (originated by Prop.~\ref{prop:symmetrization}, formally upper bounded in \eqref{eqn:dev}), first recall $2^k \leq 2 (\widehat{\mathsf{Opt}}_n \vee 1)$, we know
	\begin{align*}
		\text{(ii)} \leq \tfrac{4(1+\delta)}{\delta} (\widehat{\mathsf{Opt}}_n \vee 1) \cdot \text{(iv)} + \text{(iii)} \;. \numberthis \label{eqn:lhs-end}
	\end{align*}
	Similarly, by Prop.~\ref{prop:symmetrization}, the deviation term (iii) can be bounded by 
	$6 (\widehat{\mathsf{Opt}}_n \vee 1) \cdot \sup_{x,y}\| \tfrac{\phi_x}{y} \|_{\Hil}^2 \cdot \sqrt{\tfrac{k_0+t}{n}} $ with $k_0 =  \text{c}_{\omega} \log\log(n)$.

	To bound the expected operator norm for the above random matrix, namely term (iv), we rely on matrix Bernstein's inequality plus a truncation technique. Observe that 
	\begin{align*}
		\E_{\epsilon}\big[\epsilon \tfrac{\phi_{x}}{y} \otimes \tfrac{\phi_{x}}{y} \big] = 0, ~\text{and} ~ \| \epsilon \tfrac{\phi_{x}}{y} \otimes \tfrac{\phi_{x}}{y}   \|_{\rm op} \leq \sup_{x,y}\| \tfrac{\phi_x}{y} \|_{\Hil}^2 ~\text{a.s.} \;,
	\end{align*} 
	and that
	\begin{align*}
		\left\| \sum_{i=1}^n \E_{\epsilon} \left[ \big( \epsilon_i \tfrac{\phi_{x_i}}{y_i} \otimes \tfrac{\phi_{x_i}}{y_i} \big)^2 \right] \right\|_{\rm op} \leq \big( \sup_{x,y}\| \tfrac{\phi_x}{y} \|_{\Hil}^2 \big)^2 \cdot n \;.
	\end{align*}
	Naively applying the matrix Bernstein inequality, one would expect the term (iv) to behave like
	$
		\sup_{x,y}\| \tfrac{\phi_x}{y} \|_{\Hil}^2 \cdot \big( \sqrt{\tfrac{t}{n}} \vee \tfrac{t}{n} \big)
	$
	with probability $1 - \text{dim}(\phi_x) \cdot \exp(-t)$ on $\{ \epsilon_i \}_{1}^n$. This is educative yet wrong, since $\text{dim}(\phi)$ is infinity. To make things rigorous, we rely on a truncation technique to look at a finite-dimensional version $\phi_x^{\leq m}$ truncated at a level $m = \text{poly}(n)$ to apply matrix Bernstein, and then estimate the remaining contribution from $\phi_x^{> m}$ by the eigenvalue decay in Assumption \textsf{[S1]}. With details given in Prop.~\ref{prop:infinite-matrix-bernstein}, we derive that
	\begin{align*}
		\text{(iv)} \leq \text{C}_{\tau} \cdot \sup_{x,y}\| \tfrac{\phi_x}{y} \|_{\Hil}^2 \cdot  \big( \sqrt{\tfrac{\log(n)}{n}} \vee \tfrac{\log(n)}{n} \big) \;. \numberthis \label{eqn:matrix-bernstein}
	\end{align*}

	The final piece of the puzzle lies in the term $\sup_{(x,y) \in \mathrm{dom}(\mathcal{P})}\| \tfrac{\phi_x}{y} \|_{\Hil}^2$, which appears in both the main term (iv) and deviation term (iii). It is not true that a.s. for all $x, y$, the above term is bounded. To resolve this issue, we rely on a conditional quantile technique. Introduce the conditional quantile function $Q_{\by^2|\bx = x}(\cdot): [0,1] \rightarrow \Reals_+$ for the conditional random variable $\by^2|\bx = x$.
	Let's only look at data $(x_i, y_i)$'s lying in the region
	\begin{align*}
		\Omega := \{ (x, y)  ~|~ y^2 > Q_{\by^2|\bx = x}(1-\eta) \}, 
	\end{align*}
	and denote $\mathcal{P}|_{\Omega}$ as the conditional distribution of data $(\bx, \by)$ conditioning on the region $\Omega$. 
	Claim that for any $\widehat{\mathsf{PI}}(\bx, \delta)$
	\begin{align}
		\Pr_{(\bx, \by)\sim \mathcal{P}}\big[ \by \notin \widehat{\mathsf{PI}}(\bx, \delta) \big] \leq \Pr_{(\bx, \by)\sim \mathcal{P}|_{\Omega}}\big[ \by \notin \widehat{\mathsf{PI}}(\bx, \delta) \big] \;. \label{eqn:conditional-quantile-trick}
	\end{align}
	This is based on two facts. First, 
	\begin{align*}
		&\Pr\big[ \by^2 > (1+\delta) \widehat{\mathsf{v}}(x) ~|~ \bx = x \big] \\
		&\leq \tfrac{ \Pr\big[\by^2 >  (1+\delta) \widehat{\mathsf{v}}(x) \vee Q_{\by^2|\bx = x}(1-\eta)  ~|~ \bx = x \big] }{ \Pr\big[\by^2 > Q_{\by^2|\bx = x}(1-\eta)  ~|~ \bx = x \big] } \\
		&= \Pr\big[ \by^2 > (1+\delta) \widehat{\mathsf{v}}(x) ~|~ \bx = x,~ (\bx, \by) \in \Omega \big]  \;. \numberthis \label{eqn:marginalize}
	\end{align*}
	regardless of the ordering of $Q_{\by^2|\bx = x}(1-\eta)$ and $(1+\delta) \widehat{\mathsf{v}}(x)$.
	Second, conditioning on $\Omega$ does not change the marginal distribution of $\bx$ due to the quantile construction, namely
	$\mathcal{P}_{\bx} |_{\Omega} \equiv \mathcal{P}_{\bx}$. 
	Marginalizing \eqref{eqn:marginalize} over $x$ proves the above claim. 

	The inequality (\ref{eqn:conditional-quantile-trick}) makes the analyses upper bounding $\text{\small (LHS)}$ from \eqref{eqn:lhs-start}-(\ref{eqn:lhs-end}) applicable, with the changes: (a) $\mathcal{P}|_{\Omega}$ replacing $\mathcal{P}$, and (b) $\widehat{\E}$ denoting average over data points inside $\Omega$ rather than the whole dataset. With the conditioning on $\Omega$, Assumption \textsf{[S2]} implies
	$
	Q_{\by^2|\bx = x}(1-\eta) \geq \xi \cdot K(x, x),
	$
	and thus
	\begin{align*}
		\sup_{(x,y) \in \mathrm{dom}(\mathcal{P}|_{\Omega})}\| \tfrac{\phi_x}{y} \|_{\Hil}^2 \leq \frac{K(x, x)}{Q_{\by^2|\bx = x}(1-\eta) } \leq \xi^{-1} \;. \numberthis \label{eqn:last-piece}
	\end{align*}
	Now, we only need to estimate the effective sample size inside $\Omega$ to complete the analyses. By the quantile construction, 
	$\Pr_{(\bx, \by) \sim \mathcal{P}}\big[ (\bx, \by) \in \Omega \big] = \eta$, a simple Bernstein's inequality asserts that 
	\begin{align*}
		|\{ i~:~ (x_i, y_i) \in \Omega \}| > \tfrac{\eta}{2} \cdot n
	\end{align*}
	with probability at least $1 - \exp(- \text{c}_{\eta} \cdot n)$ on $\{ z_i\}_{1}^n$.

	Finally, plug~\eqref{eqn:last-piece} into upper bounds on terms (iii) and (iv), with $\tfrac{\eta}{2} \cdot n$ replacing $n$ in \eqref{eqn:matrix-bernstein} and \eqref{eqn:lhs-end}, we have proved that
	\begin{align}
		\text{\small (LHS)} &\leq \delta^{-1}  (\widehat{\mathsf{Opt}}_n \vee 1)   \sqrt{\tfrac{ \text{C}_{\tau, \xi, \eta} \log (n)}{n}} \quad \text{(main term)} \\
		& \quad + (\widehat{\mathsf{Opt}}_n \vee 1) \sqrt{\tfrac{ \text{C}_{\xi, \eta, \omega} (\log\log(n) +  t)}{n}} \quad \text{(deviation term)} \label{eqn:dev}
	\end{align}
	with probability at least $1 - \exp(- \text{c}_{\eta} \cdot n) - 2\exp(-t)$ on $\{ z_i\}_{1}^n $.

\section{Numerical Studies}
We now study the numerical performance of our procedure.

\subsection{Empirical Example: Comparison to Conformal Prediction}
\label{sec:comparision-conformal}
This section compares our SDP methods to the conformal prediction methods, measuring the coverage and statistical efficiency of the prediction bands. We compare five methods on two simulated data sets, including (a) standard prediction band using simple linear regression (SLR), without accounting for heteroscedasticity, as a baseline; (b) SDP prediction bands proposed in this paper, with two specifications of the kernels, denoted as SDP1 and SDP2; (c) conformal prediction bands, including full conformal prediction (CF) and split conformal prediction (SplitCF), see \cite[Algorithms 1 and 2 respectively]{lei2018DistributionFreePredictive}. For each method, we report the coverage probability, efficiency statistics (including median length and average length), and finally, the mean squared error (MSE) of the estimated conditional mean function. 

\begin{figure*}[tbhp]
	\centering
    \begin{subfigure}{.5\textwidth}
        \centering
        \includegraphics[width=\linewidth]{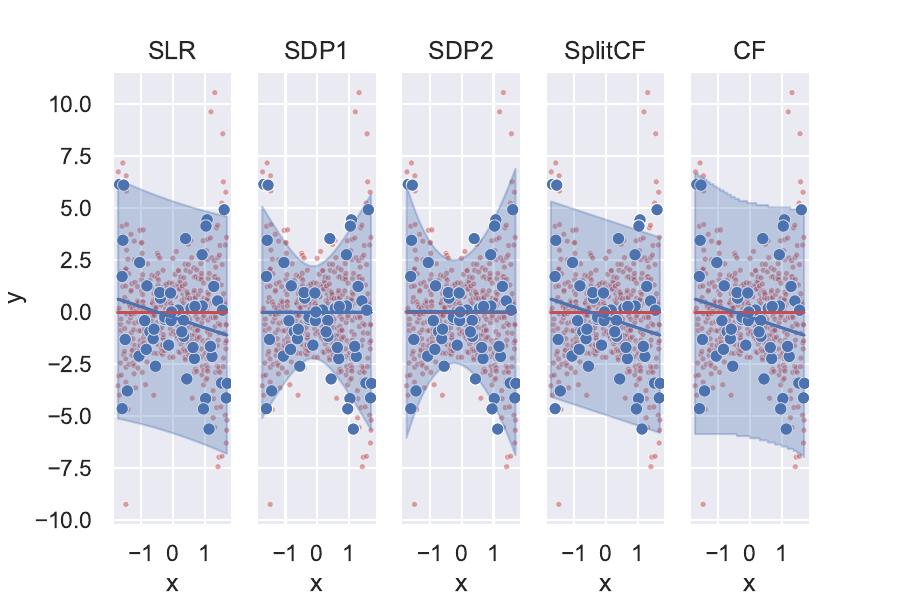}
		\caption{}
        \label{fig:comparison_Gauss}
    \end{subfigure}%
    \begin{subfigure}{0.5\textwidth}
        \centering
        \includegraphics[width=\linewidth]{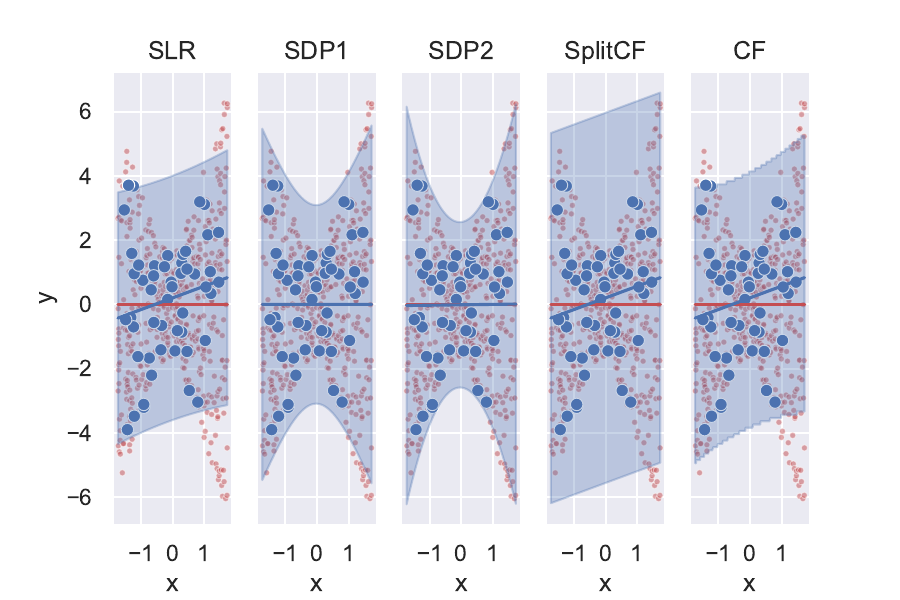}
		\caption{}
        \label{fig:comparison_Unif}
    \end{subfigure}
	\caption{\scriptsize From left to right: SLR, SDP1, SDP2, split conformal (SplitCF), and full conformal (CF). Same style as in Fig.~\ref{fig:intro_example1}-\ref{fig:intro_example2}. Statistics are summarized in Table~\ref{table:comparison}.}
\end{figure*}

We first explain the two simulated data sets. Here the $\bx \sim \text{Unif}([-\sqrt{3}, \sqrt{3}])$, and the conditional distribution $\by| \bx = x \sim \epsilon \cdot \sqrt{\mathsf{v}(x)}$ where the independent error $\epsilon$ is either drawn from a standard normal $\text{N}(0,1)$ (Fig.~\ref{fig:comparison_Gauss}) or a uniform distribution $\text{Unif}([-\sqrt{3}, \sqrt{3}])$ (Fig.~\ref{fig:comparison_Unif}). The conditional mean function $\mathsf{m}(x)=0$. The heteroscedastic variance function scales as $\mathsf{v}(x) = 1 + x + 4x^2$, depending on $x$. For each simulated data set, we generate $\{(x_i, y_i)\}_{i=1}^n$ i.i.d. from the above data generating process (DGP), with $n = 50$ as the training data, marked by \textcolor{blue}{Blue dots}. The test data set are drawn from the same DGP, with $N=500$ marked by \textcolor{red}{Red dots}. For the SDP1/SDP2, and SplitCF, an independent calibration data set with $n=50$ is used. The calibration set is used to choose $\delta$ in SDPs as in Algorithm~\ref{alg:calibration}, and the homoscedestic conformal bandwidth as in SplitCF. We compare five methods that construct the prediction bands, illustrated by the \textcolor{blue}{Blue band}, on the Gaussian error data set in Fig.~\ref{fig:comparison_Gauss}, and Uniform error data set in Fig.~\ref{fig:comparison_Unif}. For all methods, the desired coverage is set at $1-\alpha = 95\%$. For the SDPs in \eqref{eqn:SDP-simultaneous} $\gamma = 10$. Table~\ref{table:comparison} summarizes the coverage, efficiency, and estimation error.

\begin{table}[tbhp]
\centering
\caption{}
\label{table:comparison}
\begin{tabular}{lrrrr}
	 & Coverage & Median Len & Average Len & MSE \\
\midrule
\multicolumn{4}{l}{Fig.~\ref{fig:comparison_Gauss}: $\mathsf{m}(x) = 0$, quadratic $\mathsf{v}(x)$, $\by|\bx \sim$ Gaussian} \\
\midrule
SLR  & 97.40\% & 11.2272 & 11.2537 & 0.5554 \\
SDP1 & 92.80\% & 6.5172 & 6.9019 & 0.0002\\
SDP2 & 94.20\% & 7.0025 & 7.6900 & 0.0272 \\
SplitConformal & 96.00\% & 9.3960 & 9.3960 & 0.5554 \\
Conformal & 97.40\% & 11.5152 & 11.5911 & 0.5554 \\
\midrule
\multicolumn{4}{l}{Fig.~\ref{fig:comparison_Unif}: $\mathsf{m}(x) = 0$, quadratic $\mathsf{v}(x)$, $\by|\bx \sim$ Uniform} \\
\midrule
SLR  & 89.60\% & 7.6882 & 7.7077 & 0.4405 \\
SDP1 & 95.40\% & 7.9161 & 8.1540 & 0.0000 \\
SDP2 & 96.60\% & 7.3064 & 7.8175 & 0.0183 \\
SplitConformal & 97.80\% & 11.5199 & 11.5199 & 0.4405 \\
Conformal & 92.80\% & 8.0808 & 8.1651 & 0.4405 \\
\bottomrule
\end{tabular}
\end{table}

Nearly all methods achieve $95\%$ desired coverage, with the only exception of SLR. The focus will be on comparing efficiencies, namely, which method estimates a smaller, truly heteroscedastic band in achieving the desired coverage.
As seen visually in Fig.~\ref{fig:comparison_Gauss}-\ref{fig:comparison_Unif} and numerically in Table~\ref{table:comparison}, the two conformal methods, SplitCF and CF, estimates a conservative, wide prediction band that is almost homoscedastic. In contrast, both SDP approaches estimate desirable heteroscedastic bands that are on average much shorter, with the closest ($94.20\%$ and $95.40\%$) to the desired $95\%$ coverage. Finally, we would like to remark that all four methods SLR, SDP1, SDP2 and SplitCF are efficient to compute. Yet, the full conformal method CF involves discretizing input space, which is computationally intensive. Our empirical results show that the two conformal methods can be unnecessarily conservative and form bands of nearly constant width across $x$ \citep{romano2019ConformalizedQuantile}.

%
%
%
%
%
%
%
%

%
%
%
%
%
%
%
%

\subsection{Real Data Example: Fama-French Factors}
\label{sec:Real-Data-Fama-French}


In this section, we apply our method of constructing the prediction band to the celebrated three-factor dataset created by \cite{fama1993CommonRisk}. We choose this dataset for three reasons: (a) financial data are known to suffer severe heteroscedasticity, (b) the factors are believed to be different sources explaining returns of diversified portfolios, thus when conditioned on one factor, the other factors should have large, heteroscedastic conditional variability, and (c) the factors---Market, Size, and Value--- correspond nicely to our common sense about the financial market for exploratory data analysis. 

Let us first explain the data in plain language. The dataset consists of yearly and monthly observations of four variables from July 1926 to December 2020. The four variables are (a) Risk-free return rate (\textsf{\small RF}), the one-month Treasury bill rate (i.e., interest rate), (b) Market factor (\textsf{\small MKT}), the excess return on the market (i.e., market return minus interest rate), (c) Size factor (\textsf{\small SMB}, Small Minus Big), the average difference in returns between small and big portfolios according to the market capitalization, and (d) Value factor (\textsf{\small HML}, High Minus Low), the difference in returns between value and growth portfolios. We design two experiments, one focusing on the prediction coverage and bandwidth, and the other on exploring the role of the tuning parameter $\gamma$ in trading off mean and variance. 

The first experiment aims to access the prediction coverage in the SDP~\eqref{eqn:SDP-simultaneous}, using \textsf{\small MKT} (as $x$) to predict other variables (as $y$): \textsf{\small RF} and two other factors \textsf{\small SMB} and \textsf{\small HML}. Here we use yearly data ($n = 94$ from 1927-2020, shown as \textcolor{blue}{Blue dots}) to construct the prediction bands, each illustrated in Fig.~\ref{fig:RF-yearly}-\ref{fig:HML-yearly}. As for the test data, we use the standardized monthly data ($N = 1134$, normalized to zero mean and unit standard deviation, shown as \textcolor{red}{Red dots}) as a surrogate for test $(x, y)$ pairs. Namely, we match $12$ test data to each training data. We verified that after standardization, the histograms of yearly and monthly data match nicely for all four variables. For each type of response variable, we run two SDPs with different kernels. The summary statistics about the coverage probability, median, and mean bandwidth are given in Table~\ref{table:fama-french}. 


\begin{figure*}[tbhp]
	\centering
	    \begin{subfigure}{0.33\textwidth}
	        \centering
	        \includegraphics[width=\linewidth]{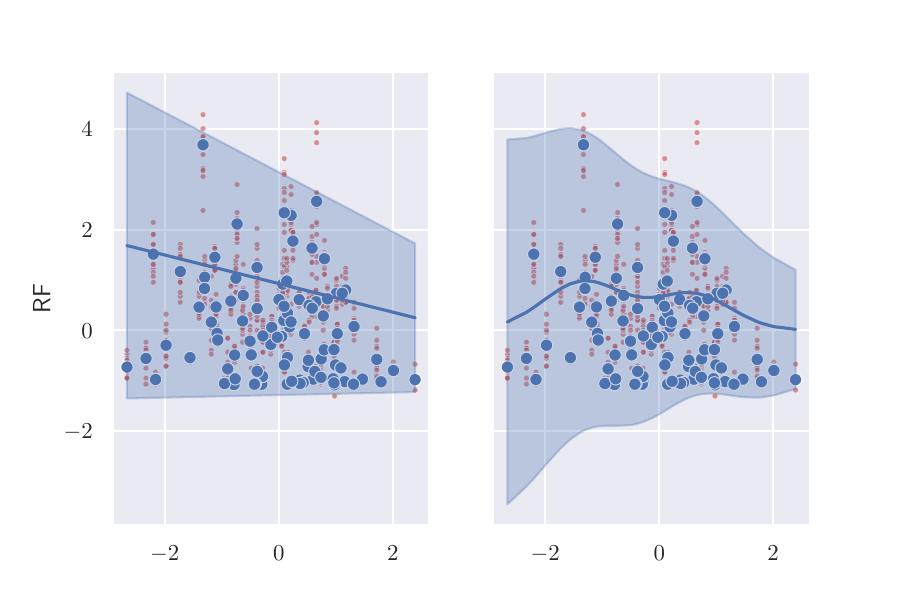}
	        \caption{RF}
	        \label{fig:RF-yearly}
	    \end{subfigure}
	    \begin{subfigure}{0.33\textwidth}
	        \centering
	        \includegraphics[width=\linewidth]{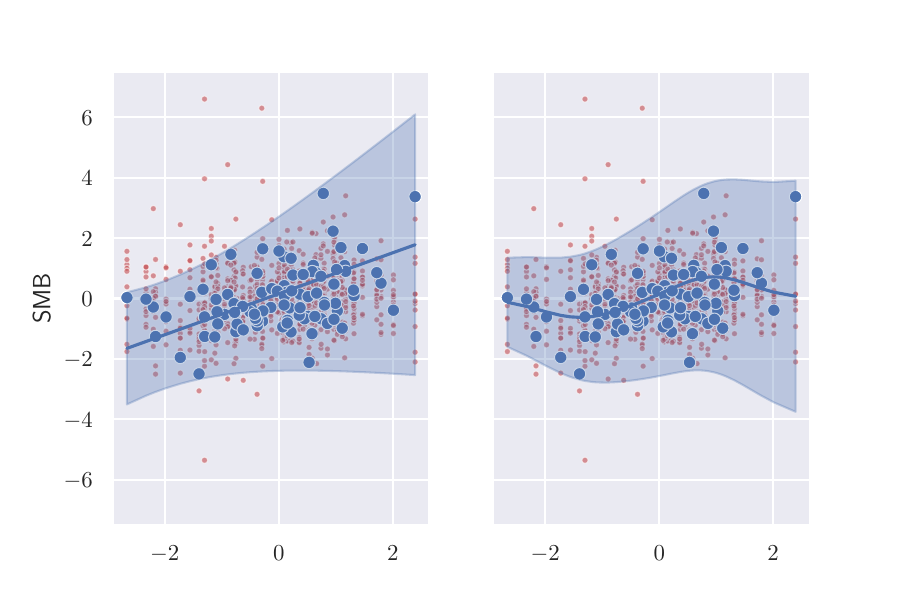}
	        \caption{SMB}
	        \label{fig:SMB-yearly}
	    \end{subfigure}%
	    \begin{subfigure}{0.33\textwidth}
	        \centering
	        \includegraphics[width=\linewidth]{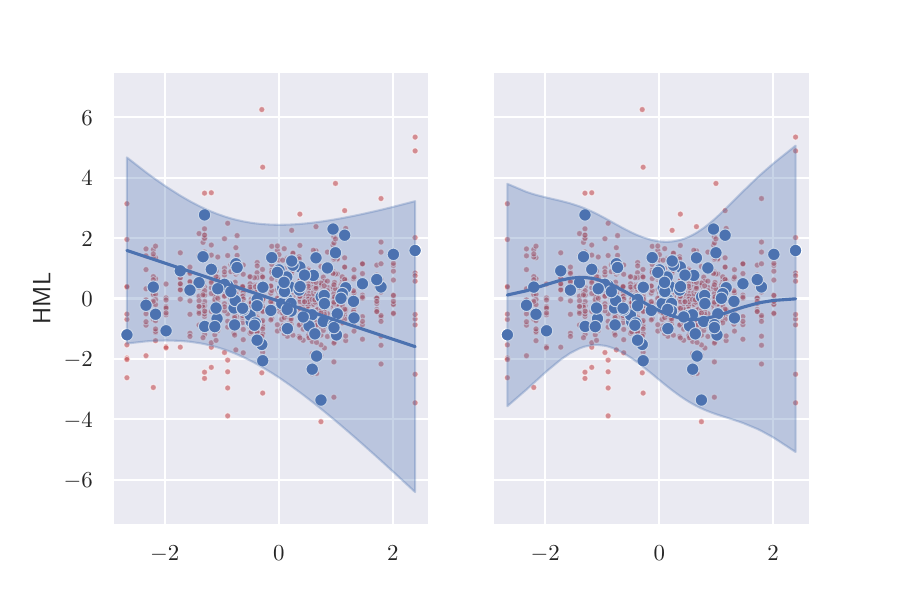}
	        \caption{HML}
	        \label{fig:HML-yearly}
	    \end{subfigure}
		\caption{\scriptsize From left to right: response variable $y$ corresponds to \textsf{\small RF}, \textsf{\small SMB} and \textsf{\small HML}, with $x$ being \textsf{\small MKT}. \textcolor{blue}{Blue dots} denote $n=94$ training data $\{(x_i, y_i)\}_{i=1}^n$, \textcolor{blue}{Blue line} denotes the estimated conditional mean $\widehat{\mathsf{m}}(x)$, and \textcolor{blue}{Blue band} denotes the estimated prediction band $\widehat{\mathsf{PI}}(x)$. \textcolor{red}{Red dots} represent $N=1134$ test data points.}
\end{figure*}

\begin{table}[tbhp]
\centering
\caption{Real data: Fama-French}
\label{table:fama-french}
\begin{tabular}{lcrrr}
 & Kernel & Coverage & Median Len & Average Len \\
 \midrule
 \textsf{RF} & lin $\mathsf{m}(x)$, quad $\mathsf{v}(x)$ & \textbf{98.68\%} & \textbf{4.3616} & \textbf{4.4358} \\
 \textsf{RF} & rbf $\mathsf{m}(x)$, quad $\mathsf{v}(x)$ & 98.59\% & 4.5693 & 4.6847 \\
\midrule
\textsf{SMB} & lin $\mathsf{m}(x)$, quad $\mathsf{v}(x)$ & 95.77\% & \textbf{5.2560} & \textbf{5.2798}  \\
\textsf{SMB} & rbf $\mathsf{m}(x)$, quad $\mathsf{v}(x)$ & \textbf{97.53\%} & 5.5407 & 5.4290 \\
\midrule
\textsf{HML} & lin $\mathsf{m}(x)$, quad $\mathsf{v}(x)$ & 96.56\% & 5.2822 & 5.5556 \\
\textsf{HML} & rbf $\mathsf{m}(x)$, quad $\mathsf{v}(x)$ &\textbf{97.27\%} & \textbf{4.9180} & \textbf{5.3640} \\
\bottomrule
\end{tabular}
\end{table}

We note a few observations regarding the empirical results. First, all models achieve desirable coverage (all $>95\%$). Second, controlling for \textsf{\small MKT}, all other factors have significant heteroscedastic error left unexplained. For the \textsf{\small RF}, a high \textsf{\small MKT} return implies a low expected \textsf{\small RF} interest, and more importantly, a small variability, compared to the low \textsf{\small MKT} return case. For the size factor \textsf{\small SMB}, the conditional variability is much larger when the \textsf{\small MKT} is high vs. low, so does the conditional expectation. The conditional variability in \textsf{\small SMB} is roughly minimized when the market is significantly below average. While for the value factor \textsf{\small HML}, conditional variability is minimized when the market is slightly below its average. 

The second experiment aims to verify the mean and variance trade-offs by tuning the parameter $\gamma$, discussed in Sec.~\ref{sec:sos-mni}. Here we use the monthly return data, and for each sub-experiment, we split the data into (train, valid, test) parts. We train models with different $\gamma = 0.1, 1, 10$ on the training data, then valid their performances on the validation data. We finally evaluate the performance using the test data with the cross-validated optimal $\gamma$ (based on the validation data). A nice feature about this experiment is that, one can visualize how the SDPs trade a complex/large conditional variance $\mathsf{v}(x)$ for a simple/small conditional mean $\mathsf{m}(x)$ in explaining $\by|\bx=x$ as $\gamma$ increases, illustrated by Fig.~\ref{fig:gamma}.

%
%

%

%

%

\section{Summary}
The current paper progresses to resolve the uncertainty quantification dilemma faced by modern machine learning models. There are two innovative viewpoints we are taking. First, rather than relying on idealized parametric distributional assumptions on error $\by-\mathsf{m}(\bx)$, we make minimal assumptions.  Both the conditional mean and variance functions are modeled nonparametrically and can universally approximate all functions. It is worth noting that such flexibility does not hinder computational feasibility due to the sum-of-squares and convex relaxations. The computational complexity and statistical guarantee scale favorably with high-dimensional covariates $x$. Second, rather than modeling the conditional mean only and giving up the variance (Frequentist justification, the conditional mean is assumed inside an RKHS, see \citep{caponnetto2007optimal, liang2020JustInterpolate, liang2020MultipleDescent}), or modeling the conditional variance function only (Bayesian justification of kriging/Gaussian processes regression, the covariance function is specified by a kernel, see \citep{handcock1993bayesian,stein2005space, stein2012interpolation}) for the variability in data, we model both the mean and the variance and prove strong, non-asymptotic Frequentist coverage guarantees. Such a modeling advantage enables the uncertainty quantification with or without any black-box predictive model, whether accurate or not.

To conclude, our Theorem~\ref{thm:coverage} established a strong, non-asymptotic coverage guarantee in the language of Neyman, yet with two distinct new features. First, the coverage probability can go to $1$ with a fixed confidence parameter $\delta$ as long as the sample size $n$ is large enough. Second, the data-adaptive quantity $\widehat{\mathsf{Opt}}_n$ controls both the average bandwidth and the coverage guarantee of the prediction band $\widehat{\mathsf{PI}}(x)$. A small objective value of the SDP makes the prediction band accurate and narrow simultaneously. Finally, our procedure for constructing prediction bands can be viewed as a novel variance interpolation with confidence and further leverages techniques from semi-definite programming and sum-of-squares optimization. We conducted simulated and real data experiments to validate the prediction interval's numerical performance for uncertainty quantification. A minimal 10-line Python implementation is provided in Listing.~\ref{code} for interested readers.


\begin{figure*}[tbhp]
	\centering
	    \begin{minipage}{0.5\textwidth}
	        \centering
	        \includegraphics[width=\linewidth]{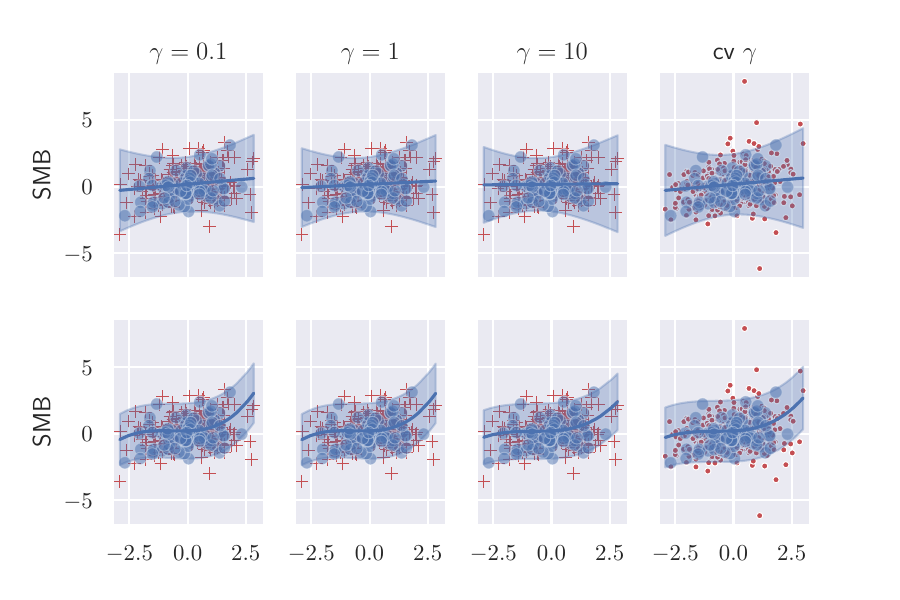}
	    \end{minipage}%
	    \begin{minipage}{0.5\textwidth}
	        \centering
	        \includegraphics[width=\linewidth]{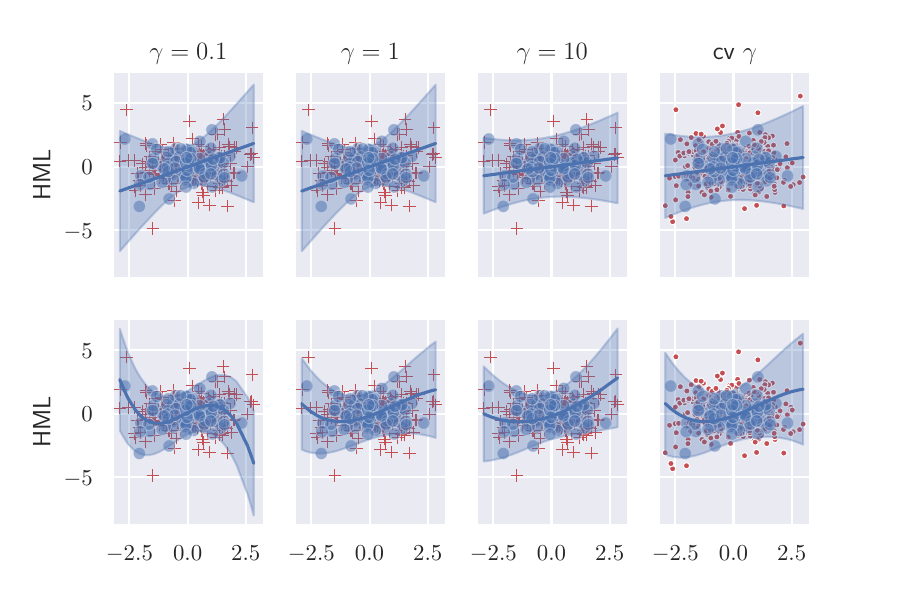}
	    \end{minipage}
		\caption{\scriptsize Cross-validate $\gamma$ experiment. On the left is \textsf{\small SMB} as the response and on the right is \textsf{\small HML}. For each response variable, we run two sub-experiments: the top row corresponds to a linear $\mathsf{m}(x)$ and a quadratic $\mathsf{v}(x)$, and the bottom row corresponds to a degree-$3$ polynomial $\mathsf{m}(x)$ and a quadratic $\mathsf{v}(x)$. Each sub-figure corresponds to a specific $\gamma$, noted in its title. The cv $\gamma$ denotes the cross-validated optimal $\gamma$ using the validation dataset. Here the (train, valid, test) dataset has size proportional to 1:3:9, denoted by \textcolor{blue}{Blue dots}, \textcolor{red}{Red pluses}, and \textcolor{red}{Red dots} respectively.}
		\label{fig:gamma}
\end{figure*}

\section*{Acknowledgement}

Liang acknowledges the generous support from the NSF Career award (DMS-2042473)
and the William S. Fishman fellowship. Liang thanks Michael Stein, Ruey Tsay, and Vladimir Vovk for constructive comments. Liang thanks the editor, associate editor, and three anonymous referees for the valuable suggestions that improve the paper.

\bibliography{ref}
\bibliographystyle{unsrtnat}


\newpage

\appendix
\section{Appendix}

\subsection{Remaining Propositions}
In this section, we collect the remaining propositions.
\begin{proposition}[Representation]
	\label{prop:representation}
	The kernelized version of the SDP as in \eqref{eqn:SDP-simultaneous} is equivalent to the following infinite-dimensional SDP
	\begin{align*}
	\min_{\beta \in \Hil^{\mathsf{m}}, ~\bA: \Hil^{\mathsf{v}} \rightarrow \Hil^{\mathsf{v}}} \quad & \gamma \cdot \| \beta \|_{\Hil^{\mathsf{m}}}^2 +  \| \bA \|_{\star} \\
	\text{s.t.}\quad & \langle \phi_{x_i}^{\mathsf{v}} , \bA \phi_{x_i}^{\mathsf{v}} \rangle_{\Hil^{\mathsf{v}}} \geq \big( y_i - \langle  \phi_{x_i}^{\mathsf{m}}, \beta \rangle_{\Hil^{\mathsf{m}}} \big)^2, ~ \forall i \enspace. \\
			& \bA \succeq 0
	\end{align*}
\end{proposition}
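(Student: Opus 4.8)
The plan is to establish the equivalence in two directions, using a representer-type argument to pass from the infinite-dimensional SDP down to the kernelized one, and a lifting argument to go back up. First I would set up notation: let $\Phi^{\mathsf{m}} = [\phi^{\mathsf{m}}_{x_1}, \ldots, \phi^{\mathsf{m}}_{x_n}]$ and $\Phi^{\mathsf{v}} = [\phi^{\mathsf{v}}_{x_1}, \ldots, \phi^{\mathsf{v}}_{x_n}]$ be the (formal) feature matrices, so that $\bK^{\mathsf{m}} = (\Phi^{\mathsf{m}})^\top \Phi^{\mathsf{m}}$, $\bK^{\mathsf{v}} = (\Phi^{\mathsf{v}})^\top \Phi^{\mathsf{v}}$, and $\sK^{\mathsf{m}}_i = (\Phi^{\mathsf{m}})^\top \phi^{\mathsf{m}}_{x_i}$, $\sK^{\mathsf{v}}_i = (\Phi^{\mathsf{v}})^\top \phi^{\mathsf{v}}_{x_i}$. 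The map relating the two problems is $\beta = \Phi^{\mathsf{m}} \alpha$ and $\bA = \Phi^{\mathsf{v}} \bB (\Phi^{\mathsf{v}})^\top$; under this substitution one checks directly that $\langle \phi^{\mathsf{m}}_{x_i}, \beta\rangle_{\Hil^{\mathsf{m}}} = \langle \sK^{\mathsf{m}}_i, \alpha\rangle$ and $\langle \phi^{\mathsf{v}}_{x_i}, \bA \phi^{\mathsf{v}}_{x_i}\rangle_{\Hil^{\mathsf{v}}} = \langle \sK^{\mathsf{v}}_i, \bB \sK^{\mathsf{v}}_i\rangle$, so the constraint sets match, while $\|\beta\|^2_{\Hil^{\mathsf{m}}} = \langle \alpha, \bK^{\mathsf{m}}\alpha\rangle$ is immediate. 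The only nontrivial identity is the objective term $\|\bA\|_\star = \tr(\bK^{\mathsf{v}} \bB)$.

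Next I would prove the two inequalities between optimal values. For the direction ``infinite $\le$ finite'': given any feasible $(\alpha, \bB)$ for the kernelized SDP with $\bB\succeq 0$, set $\beta = \Phi^{\mathsf{m}}\alpha$ and $\bA = \Phi^{\mathsf{v}}\bB(\Phi^{\mathsf{v}})^\top \succeq 0$; this is feasible for the infinite-dimensional SDP with matching mean term, and one needs $\|\bA\|_\star \le \tr(\bK^{\mathsf{v}}\bB)$. Writing $\bB = \sum_k \sigma_k u_k u_k^\top$ with $\sigma_k \ge 0$, we get $\bA = \sum_k \sigma_k (\Phi^{\mathsf{v}} u_k)(\Phi^{\mathsf{v}} u_k)^\top$, so by the triangle inequality for the nuclear norm, $\|\bA\|_\star \le \sum_k \sigma_k \|\Phi^{\mathsf{v}} u_k\|^2_{\Hil^{\mathsf{v}}} = \sum_k \sigma_k \langle u_k, \bK^{\mathsf{v}} u_k\rangle = \tr(\bK^{\mathsf{v}}\bB)$. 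For the reverse direction ``finite $\le$ infinite'': given feasible $(\beta, \bA)$ for the infinite-dimensional SDP, I would argue that we may project onto the span of the feature vectors without increasing the objective or violating constraints — the constraints only see $\phi^{\mathsf{m}}_{x_i}, \phi^{\mathsf{v}}_{x_i}$, so replacing $\beta$ by its projection $P^{\mathsf{m}}\beta$ onto $\mathrm{span}\{\phi^{\mathsf{m}}_{x_j}\}$ leaves all $\langle \phi^{\mathsf{m}}_{x_i},\beta\rangle$ unchanged while not increasing $\|\beta\|^2$, and replacing $\bA$ by $P^{\mathsf{v}}\bA P^{\mathsf{v}}$ (with $P^{\mathsf{v}}$ the projection onto $\mathrm{span}\{\phi^{\mathsf{v}}_{x_j}\}$) leaves the quadratic forms $\langle \phi^{\mathsf{v}}_{x_i}, \bA \phi^{\mathsf{v}}_{x_i}\rangle$ unchanged, preserves $\bA\succeq 0$, and does not increase $\|\bA\|_\star$ (compressions decrease the nuclear norm of a PSD operator). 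After this reduction, $\beta = \Phi^{\mathsf{m}}\alpha$ for some $\alpha \in \Reals^n$ and $\bA = \Phi^{\mathsf{v}} \bB (\Phi^{\mathsf{v}})^\top$ for some $\bB\succeq 0$ supported on the row space, and running the nuclear-norm computation of the previous step as an equality (using that $\bA$'s nonzero eigenvectors lie in the column span of $\Phi^{\mathsf{v}}$, so no slack is lost) gives $\tr(\bK^{\mathsf{v}}\bB) = \|\bA\|_\star$, completing the matching of objectives.

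The main obstacle is the careful handling of the nuclear norm in infinite dimensions: I need that for a PSD trace-class operator $\bA = \Phi^{\mathsf{v}}\bB(\Phi^{\mathsf{v}})^\top$ with $\bB\succeq 0$, the identity $\|\bA\|_\star = \tr_{\Hil^{\mathsf{v}}}(\bA) = \tr(\bK^{\mathsf{v}}\bB)$ holds — for PSD operators the nuclear norm equals the trace, and the trace identity $\tr(\Phi^{\mathsf{v}}\bB(\Phi^{\mathsf{v}})^\top) = \tr(\bB (\Phi^{\mathsf{v}})^\top\Phi^{\mathsf{v}}) = \tr(\bB \bK^{\mathsf{v}})$ follows from cyclicity, which is valid here because the relevant operators have finite rank. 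The subtlety is that in the ``finite $\le$ infinite'' direction one must first justify that the optimal $\bA$ can be taken finite-rank (equivalently, trace-class) and supported in the span of the data features; this is exactly what the projection/compression step accomplishes, using the fact that compressing a PSD operator to a subspace never increases its trace. One should also note feasibility is preserved throughout since all inner products appearing in the constraints depend only on the compressed objects. Modulo these standard functional-analytic facts about trace-class operators and compressions, the argument is a direct verification.
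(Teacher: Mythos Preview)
Your proposal is correct and follows essentially the same approach as the paper: both reduce the infinite-dimensional SDP to the kernelized one via a representer-theorem argument (the optimal $\beta$ and $\bA$ may be taken in the span of the data features), together with the identity $\|\bA\|_\star = \tr(\bA)$ for PSD operators and the trace/cyclicity computation $\tr(\Phi^{\mathsf{v}}\bB(\Phi^{\mathsf{v}})^\top) = \tr(\bK^{\mathsf{v}}\bB)$. The paper's proof is a two-sentence sketch that simply asserts the span reduction, whereas you spell out the projection/compression step and the two-direction inequality explicitly; the substance is the same.
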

\textbf{Proof.} Noticing that the solution to the infinite-dimensional problem must lie in the span of data, namely
$\bA = \sum_{i,j} \bB_{ij} \phi_{x_i}^{\mathsf{v}} \otimes \phi_{x_j}^{\mathsf{v}}$ with some PSD $\bB \in \mathbb{S}^{n\times n}$, and $\beta = \sum_{i} \alpha_i \phi_{x_i}^{\mathsf{m}}$ with some $\alpha \in \Reals^n$. With the above representation, plug in the infinite-dimensional SDP and recall $\tr(\bA) = \| \bA \|_{\star}$, we can derive \eqref{eqn:SDP-simultaneous}.
When $\beta$ is not a decision variable, this representation theorem applies to \eqref{eqn:SDP-general}. \qed

\begin{proposition}[Symmetrization]
	\label{prop:symmetrization}
	Let $\mathcal{F}$ be a class of functions $f:\mathcal{Z} \rightarrow \Reals$, with $\sup_{x\in \mathcal{Z}} |f(z)| \leq M$. Then with probability at least $1- 2\exp(-t)$ on $\{z_i\}_{i=1}^n$ i.i.d. drawn from a distribution, we have
	\begin{align*}
		\sup_{f \in \mathcal{F}} \big| \E[f(\bz)]-\widehat{\E}[f(\bz)] \big| \leq 2 \cdot \E_{\epsilon} \sup_{f\in \mathcal{F}} \frac{1}{n} \sum_{i=1}^n \epsilon_i f(z_i) + 3M \sqrt{\frac{t}{2n}} \enspace.
	\end{align*}
\end{proposition}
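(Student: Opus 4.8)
The plan is to establish this as the now-standard data-dependent Rademacher-complexity bound, obtained by sandwiching a symmetrization step between two applications of McDiarmid's bounded-difference inequality. Write $Z := \sup_{f \in \mathcal{F}} | \E[f(\bz)] - \widehat{\E}[f(\bz)] |$, viewed as a function of the i.i.d. sample $(z_1, \dots, z_n)$, and abbreviate the data-dependent Rademacher complexity by $\widehat{\mathcal{R}} := \E_{\epsilon}\, \sup_{f \in \mathcal{F}} \tfrac{1}{n}\sum_{i=1}^n \epsilon_i f(z_i)$. Since $\sup_z |f(z)| \le M$, replacing a single coordinate $z_i$ perturbs $\widehat{\E}[f]$ by at most $2M/n$ for every $f$, hence perturbs $Z$ by at most $2M/n$, because a supremum of $1$-Lipschitz functionals is $1$-Lipschitz. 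McDiarmid's inequality then gives, with probability at least $1 - e^{-t}$, the bound $Z \le \E[Z] + 2M\sqrt{t/(2n)}$.

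The middle step is the classical ghost-sample symmetrization. Introducing an independent copy $(\bz_1', \dots, \bz_n')$ of the sample, Jensen's inequality gives $\E[Z] \le \E \sup_{f} \big| \tfrac{1}{n}\sum_i (f(\bz_i') - f(\bz_i)) \big|$; each summand $f(\bz_i') - f(\bz_i)$ is symmetric, so multiplying it by i.i.d.\ Rademacher signs $\epsilon_i$ leaves the joint law unchanged, and the triangle inequality then yields $\E[Z] \le 2\, \E_{\bz,\epsilon} \sup_{f} \tfrac{1}{n}\sum_i \epsilon_i f(\bz_i) = 2\,\E[\widehat{\mathcal{R}}]$. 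To pass from $\E[\widehat{\mathcal{R}}]$ to the data-dependent quantity $\widehat{\mathcal{R}}$ itself, apply McDiarmid once more: since $|\epsilon_i f(z_i)| \le M$, the functional $\widehat{\mathcal{R}}$ also has bounded differences $2M/n$ in the $z_i$'s (the supremum and the average $\E_\epsilon$ are both nonexpansive), so $\E[\widehat{\mathcal{R}}] \le \widehat{\mathcal{R}} + 2M\sqrt{t/(2n)}$ with probability at least $1 - e^{-t}$. Intersecting the two favorable events — probability at least $1 - 2e^{-t}$ by a union bound — and chaining the three displays gives $Z \le 2\widehat{\mathcal{R}} + O\!\big(M\sqrt{t/n}\big)$; tracking the numerical constants through the two McDiarmid steps and the symmetrization factor then yields the stated remainder $3M\sqrt{t/(2n)}$.

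This is a textbook argument and there is no genuine obstacle; the two points that demand attention are (i) verifying that it is the \emph{data-dependent} average $\E_\epsilon[\cdot]$, rather than its full expectation $\E_{\bz,\epsilon}[\cdot]$, that appears on the right-hand side, which is precisely what forces the second McDiarmid step and the factor $1 - 2e^{-t}$ in place of $1 - e^{-t}$; and (ii) bookkeeping the bounded-difference constants so that the range bound $\sup|f| \le M$ enters correctly and the final additive term comes out exactly as $3M\sqrt{t/(2n)}$. Both become routine once the three-step skeleton above is fixed.
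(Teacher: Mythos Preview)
Your proposal is exactly the argument the paper sketches: McDiarmid on $Z$, then Gin\'e--Zinn (ghost-sample) symmetrization to pass to the Rademacher average, then a second McDiarmid to replace $\E[\widehat{\mathcal{R}}]$ by the data-dependent $\widehat{\mathcal{R}}$, with a union bound over the two concentration events. The only quibble is the constant bookkeeping---with bounded differences $2M/n$ in both McDiarmid steps and the factor $2$ from symmetrization, the additive remainder comes out as $6M\sqrt{t/(2n)}$ rather than $3M\sqrt{t/(2n)}$; the paper's own sketch is equally loose here, so this is a cosmetic discrepancy in the stated constant rather than a gap in your argument.
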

\textbf{Proof.} First, with McDiarmid's inequality, we know w.h.p.
\begin{align*}
	\sup_{f \in \mathcal{F}} \big| \E[f(\bz)]-\widehat{\E}[f(\bz)] \big| \leq  \E_{\{z_i\}_{1}^{n}} \sup_{f \in \mathcal{F}} \big| \E[f(\bz)]-\widehat{\E}[f(\bz)] \big| + M \sqrt{\frac{t}{2n}} \;.
\end{align*}
Apply Gin\'{e}-Zinn symmetrization to the first term on the RHS, then apply McDiarmid's inequality again, we can establish the claim. See \cite{liang2020JustInterpolate} for details. \qed

\begin{proposition}[Objective value estimate]
	\label{prop:growth-opt}
	Under \textsf{[S3]}, the following holds with probability at least $1- n^{-10}$,
	\begin{align*}
		 \widehat{\mathsf{Opt}}_n \leq  [\log(n)]^{ \text{c}_{\omega}} \;.
	\end{align*}
\end{proposition}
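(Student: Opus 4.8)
The plan is to produce, with probability at least $1-n^{-10}$, an explicit feasible point of the SDP defining $\widehat{\mathsf{Opt}}_n$ whose objective value is polylogarithmic in $n$; since $\widehat{\mathsf{Opt}}_n$ is a minimum, this suffices.

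First I would control the responses. By \textsf{[S3]} and a union bound over the $n$ samples, $\Pr\big[\,\exists\, i\le n:\ y_i^2 > t\,K(x_i,x_i)\,\big]\le n\exp(-\text{C}\,t^{\omega})$, and taking $t=t_n:=(\tfrac{11\log n}{\text{C}})^{1/\omega}$ makes the right-hand side equal $n^{-10}$. Hence, on an event of probability at least $1-n^{-10}$, every data point obeys $y_i^2\le t_n\,K(x_i,x_i)\le t_n\,\text{C}=:R$, so that $R=O\big((\log n)^{1/\omega}\big)$. It then remains, deterministically on this event, to exhibit $\bB\succeq 0$ with $\langle\sK_i,\bB\sK_i\rangle\ge R$ for all $i$ and $\tr(\bK\bB)=O(\mathrm{polylog}(n))$.

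The crux is this construction, and here one must be careful: the obvious feasible points — a multiple of the identity, or $R\,\bK^{-2}$ — are feasible but have objective value \emph{polynomial} in $n$, because the empirical kernel matrix carries $\asymp n$ eigenvalues, most of them vanishingly small. Instead I would cover the feature vectors at a scale that does not shrink with $n$. Let $c_0:=\inf_{x\in\mathcal X}K(x,x)$, which is strictly positive since $K$ is a continuous strictly positive-definite kernel on (what we may assume is) a compact $\mathcal X$, and put $\delta:=c_0/(2\sqrt{\text{C}})$. Because $x\mapsto\phi_x$ is continuous into $\Hil$ and $\mathcal X$ is compact, $\{\phi_x:x\in\mathcal X\}$ is totally bounded, so a maximal $\delta$-separated subset $\{\phi_{x_s}\}_{s\in S}$ of $\{\phi_{x_1},\dots,\phi_{x_n}\}$ is a $\delta$-net of the latter with $|S|\le N_0$, where $N_0$ is the $\delta$-packing number of $\{\phi_x:x\in\mathcal X\}$ — a finite constant depending only on $K$ and $\mathcal X$, and crucially independent of $n$. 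For each $i$ pick $s(i)\in S$ with $\|\phi_{x_i}-\phi_{x_{s(i)}}\|_{\Hil}\le\delta$; Cauchy--Schwarz then gives $K(x_i,x_{s(i)})\ge K(x_{s(i)},x_{s(i)})-\delta\sqrt{\text{C}}\ge c_0/2$. Taking $\bB:=\tfrac{4R}{c_0^2}\sum_{s\in S}e_s e_s^{\top}\succeq 0$ with $e_s$ the standard basis vectors of $\Reals^n$, one gets for every $i$ — the indices with $y_i=0$ being automatic — that $\langle\sK_i,\bB\sK_i\rangle=\tfrac{4R}{c_0^2}\sum_{s\in S}K(x_i,x_s)^2\ge\tfrac{4R}{c_0^2}K(x_i,x_{s(i)})^2\ge R\ge y_i^2$, so $\bB$ is feasible, while $\tr(\bK\bB)=\tfrac{4R}{c_0^2}\sum_{s\in S}K(x_s,x_s)\le\tfrac{4\,\text{C}\,R}{c_0^2}N_0=O\big((\log n)^{1/\omega}\big)$, which is at most $[\log(n)]^{\text{c}_\omega}$ for a suitable constant $\text{c}_\omega$ and all large $n$.

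The main obstacle is exactly this last step: one has to recognize that feasibility is demanded only at the $n$ sample points (not uniformly over $\mathcal X$), so compactness permits a net of $n$-independent cardinality $N_0$ and paying $O(R)$ per net point yields a polylog total — whereas a single smooth operator valid for all $x$ tends to have unbounded nuclear norm and handling the points one at a time costs $\Theta(n)$. Secondary, routine points are: passing to a high-probability compact region (absorbing the finitely many excluded samples into extra rank-one terms) if $\mathcal X$ is unbounded, and checking $\inf_x K(x,x)>0$; both follow from \textsf{[S1]} and elementary continuity/compactness arguments. The same construction may equivalently be carried out in the infinite-dimensional formulation of Prop.~\ref{prop:representation}, using rank-one operators $\phi_{z}\otimes\phi_{z}$ over a net $\{z\}$ of $\mathcal X$, with identical bookkeeping.
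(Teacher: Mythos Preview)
Your probabilistic step --- the union bound via \textsf{[S3]} to obtain $y_i^2\le t_0\,K(x_i,x_i)$ for all $i$ with $t_0$ polylogarithmic in $n$ --- coincides with the paper's. After that the two arguments diverge. The paper does not construct a feasible $\bB$ in the kernelized SDP at all; instead it passes to the infinite-dimensional formulation (Prop.~\ref{prop:representation}) and takes $\bA=t_0\,\bI$, observing that $\langle\phi_{x_i},t_0\bI\,\phi_{x_i}\rangle_{\Hil}=t_0\,K(x_i,x_i)\ge y_i^2$ makes this feasible, and then declares $\widehat{\mathsf{Opt}}_n\le t_0$. Your covering construction is thus a genuinely different route: it stays entirely in the finite kernelized problem, produces an explicit rank-$N_0$ witness, and never appeals to an infinite-dimensional operator, at the price of extra topological hypotheses (effective compactness of $\mathcal X$ and $\inf_x K(x,x)>0$) needed so that the net has $n$-free cardinality --- none of which the paper invokes. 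What your approach buys is that you never have to evaluate $\|t_0\,\bI\|_\star$ on an infinite-dimensional $\Hil$, a point the paper's one-line argument leaves implicit and which is presumably exactly what led you to dismiss ``a multiple of the identity'' as having polynomial-in-$n$ objective in the first place.
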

\textbf{Proof.}
	Apply union bound on the tails given by \textsf{[S3]}, with the choice $t_0 = [\log(n)]^{ \text{c}_{\omega}}$ we know
	$$
	\Pr\big[ y_i^2 \leq t_0 \cdot K(x_i, x_i), ~\forall~ 1\leq i \leq n \big] \geq 1-  n \cdot \exp(-\text{C} t_0^{\omega}) \geq 1 - n^{-10}.
	$$
	In addition, it is easy to verify that there exist a finite rank $r$ operator such that
	$
	2 \langle \phi_{x_i}, \bI_r \phi_{x_i} \rangle_{\Hil} \geq \cdot \langle \phi_{x_i},  \phi_{x_i} \rangle_{\Hil}.
	$
	In view of Prop.~\ref{prop:representation}, the above certifies that $\bA :=  2t_0 \cdot \bI_r$ lies in the feasibility set $\langle \phi_{x_i}, \bA \phi_{x_i} \rangle_{\Hil} = t_0 \cdot K(x_i, x_i) \geq y_i^2$, which implies $2r t_0$ being an upper bound on 
$\widehat{\mathsf{Opt}}_n$. \qed

\begin{proposition}[Operator-norm estimate]
	\label{prop:infinite-matrix-bernstein}
	Under \textsf{[S1]}, for any $\{ x_i \}_{i=1}^n$, the following holds
	\begin{align*}
		\E_{\{\epsilon_i\}_{1}^n}\big\| \frac{1}{n}\sum_{i=1}^n \epsilon_i \phi_{x_i}\otimes \phi_{x_i} \big\|_{\rm op} \leq \text{C}_{\tau} \cdot \sup_{x}\| \phi_x \|_{\Hil}^2 \cdot  \big( \sqrt{\tfrac{\log(n)}{n}} \vee \tfrac{\log(n)}{n} \big) \;. 
	\end{align*}
\end{proposition}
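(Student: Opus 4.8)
The plan is to reduce the infinite-dimensional operator norm to a finite-dimensional matrix concentration bound. The key observation is that, although the feature vectors $\phi_{x_i}\in\Hil$ may live in an infinite-dimensional RKHS, the random operator $S:=\sum_{i=1}^n\epsilon_i\,\phi_{x_i}\otimes\phi_{x_i}$ is supported on $V:=\mathrm{span}\{\phi_{x_1},\dots,\phi_{x_n}\}$, a subspace of dimension $r\le n$. Since $S$ annihilates $V^\perp$ and is self-adjoint on $V$, its operator norm on $\Hil$ coincides with that of the $r\times r$ matrix representing $S|_V$ in an orthonormal basis of $V$, so it suffices to apply matrix concentration in dimension $r\le n$.

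Concretely, write $X_i:=\epsilon_i\,\phi_{x_i}\otimes\phi_{x_i}$, so that $\E_\epsilon X_i=0$ and $\|X_i\|_{\rm op}=\|\phi_{x_i}\|_{\Hil}^2\le\sup_x\|\phi_x\|_{\Hil}^2=:B$ almost surely, with $B=\sup_x K(x,x)\le\text{C}<\infty$ by \textsf{[S1]}. Using $(\phi\otimes\phi)^2=\|\phi\|_{\Hil}^2\,(\phi\otimes\phi)$ and $\epsilon_i^2=1$,
\[
\Big\|\sum_{i=1}^n\E_\epsilon\big[X_i^2\big]\Big\|_{\rm op}=\Big\|\sum_{i=1}^n\|\phi_{x_i}\|_{\Hil}^2\,\phi_{x_i}\otimes\phi_{x_i}\Big\|_{\rm op}\le B\,\Big\|\sum_{i=1}^n\phi_{x_i}\otimes\phi_{x_i}\Big\|_{\rm op}\le B\sum_{i=1}^n\|\phi_{x_i}\|_{\Hil}^2\le nB^2,
\]
using that a PSD operator has operator norm at most its trace together with $\tr(\phi\otimes\phi)=\|\phi\|_{\Hil}^2$. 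The expectation form of the matrix Bernstein inequality, applied to the $r\times r$ representatives (ambient dimension $r\le n$), then yields $\E_\epsilon\|S\|_{\rm op}\lesssim\sqrt{nB^2\log(2n)}+B\log(2n)$, and dividing by $n$ gives $\E_{\{\epsilon_i\}}\|\tfrac1n\sum_i\epsilon_i\phi_{x_i}\otimes\phi_{x_i}\|_{\rm op}\lesssim B\big(\sqrt{\log n/n}\vee\log n/n\big)$, which is the assertion (with an absolute constant; one may in fact take $\text{C}_\tau$ independent of $\tau$).

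An alternative route, matching the truncation remark in the proof sketch above, splits $\phi_x=\phi_x^{\le m}\oplus\phi_x^{>m}$ along the Mercer expansion of $K$, applies matrix Bernstein to the genuinely $m$-dimensional ``head'' term with $m=\mathrm{poly}(n)$ (so $\log m\asymp_\tau\log n$), and controls the ``tail'' and ``cross'' contributions — dominated by $\tfrac1n\sum_i\|\phi_{x_i}^{>m}\|_{\Hil}^2$ — through the eigenvalue decay $\lambda_j(\mathcal{T})\le\text{C}\,j^{-\tau}$ in \textsf{[S1]}, after which balancing $m$ produces $\text{C}_\tau$. In either approach the only genuine obstacle is the infinite-dimensionality of $\phi_x$: once this is reduced to an effective dimension $\mathrm{poly}(n)$ — or, more sharply, to $n$ via the rank-$\le n$ observation — the remainder is a routine invocation of matrix Bernstein combined with the trace bound on $\sum_i\phi_{x_i}\otimes\phi_{x_i}$.
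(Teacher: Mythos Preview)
Your main argument is correct and takes a genuinely different route from the paper. The paper proceeds by Mercer truncation: it writes $\phi_x=\phi_x^{\le m}\oplus\phi_x^{>m}$ in the eigenbasis of the integral operator $\mathcal{T}$, applies the (tail form of) matrix Bernstein to the $m$-dimensional head, bounds the tail by $\sup_i\|\phi_{x_i}^{>m}\|_{\Hil}^2\lesssim m^{-(\tau-1)}$ via the eigenvalue decay in \textsf{[S1]}, and then chooses $\log m\asymp_\tau\log n$ and integrates the tail. By contrast, your rank-reduction observation---that $S=\sum_i\epsilon_i\,\phi_{x_i}\otimes\phi_{x_i}$ lives on the deterministic $r$-dimensional subspace $V=\mathrm{span}\{\phi_{x_i}\}$ with $r\le n$---lets you apply the expectation form of matrix Bernstein directly in ambient dimension $n$, with no truncation and no appeal to eigenvalue decay. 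This is both shorter and strictly stronger: your constant is absolute (you may take $\text{C}_\tau$ independent of $\tau$), and you use only the boundedness $\sup_x K(x,x)\le\text{C}$ from \textsf{[S1]}, not the spectral decay. The paper's route, on the other hand, has the virtue of matching the ``intrinsic dimension'' heuristic advertised in the proof sketch of Theorem~\ref{thm:coverage}, and its tail bound $\|\phi_x^{>m}\|_{\Hil}^2\lesssim m^{-(\tau-1)}$ tacitly relies on the eigenfunctions $e_j$ being uniformly bounded---a point your argument sidesteps entirely. Your alternative paragraph also accurately summarizes the paper's approach.
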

\textbf{Proof.} Recall \textsf{[S1]}, due to the Mercer's theorem, one can represent $\phi_{x}$ as an infinite-dimensional vector, with each coordinate of $\phi^{j}_x$ corresponding to the eigenfunction of the integral operator, with $j=1,\ldots, \infty$ and $\lambda_j \leq \text{C} j^{-\tau}$. To bound the operator norm, recall the Rayleigh quotient form, for any $h \in \Hil$ with $\|h\|_{\Hil}^2 = 1$
\begin{align}
	\left| \big\langle h, \big(\tfrac{1}{n}\sum_{i} \epsilon_i \phi_{x_i}\otimes \phi_{x_i} \big)  h \big\rangle_{\Hil} \right| = \left| \tfrac{1}{n}\sum_{i} \epsilon_i \langle \phi_{x_i}, h \rangle_{\Hil}^2  \right|  \;. \label{eqn:rayleigh-quotient}
\end{align}
Note $\langle \phi_{x}, h \rangle_{\Hil} = \langle \phi_{x}^{\leq m}, h^{\leq m} \rangle_{\Hil} + \langle \phi_{x}^{>m}, h^{> m} \rangle_{\Hil}$ where the superscript indicates a truncation on the coordinates of $\phi_{x}$. We know
$$
\langle \phi_{x_i}, h \rangle_{\Hil}^2 =  \langle \phi_{x_i}^{\leq m}, h^{\leq m} \rangle_{\Hil}^2 +  \langle \phi_{x_i}^{>m}, h^{> m} \rangle_{\Hil}^2 + 2 \langle \phi_{x_i}^{\leq m}, h^{\leq m} \rangle_{\Hil}  \langle \phi_{x_i}^{>m}, h^{> m} \rangle_{\Hil} \;.
$$
Therefore LHS in \eqref{eqn:rayleigh-quotient} can be upper bounded by 
\begin{align*}
	 \big\| \frac{1}{n}\sum_{i=1}^n \epsilon_i \phi_{x_i}^{\leq m}\otimes \phi_{x_i}^{\leq m} \big\|_{\rm op} +  \sup_{i} \|  \phi_{x_i}^{>m} \|_{\Hil}^2 + \text{C} \cdot \sup_{i} \|  \phi_{x_i}^{>m} \|_{\Hil} \;,
\end{align*}
For some absolute constant $\text{C}>0$.
For the first term, now we can apply the matrix Bernstein inequality. With probability $1-2 \exp(-t)$ the following upper bound on the first term holds
\begin{align*}
	\sup_{x} \| \phi_x \|_{\Hil}^2 \cdot \big( \sqrt{\tfrac{\log(m) + t}{n}} \vee \tfrac{\log(m) + t}{n} \big) \;.
\end{align*}
For the second term, recall the eigenvalue decay $\lambda_j \leq \text{C} j^{-\tau}$ with $\tau>1$, we know it is upper bounded by
$\text{C} m^{-(\tau-1)}$. Choosing $\log(m) = \text{C}_{\tau} \log(n)$ with a constant large enough, we know the second and third terms are dominated by the first term. By integrating the tail bound to obtain a bound on the expected value, we complete the proof. \qed

\subsection{Proof of the Calibration Lemma}
\label{sec:calibration-proof}
\begin{proof}[Proof of Lemma~\ref{lem:calibration}]
	Define the Bernoulli random variables $z_j(\delta) := \mathbbm{1} [y_j' \notin \widehat{\mathsf{PI}}(x_j', \delta)]$, then for any fixed $\delta \in [-1, \Delta]$, we know by Hoeffding's inequality that
	\begin{align}
		\big| \Pr_{(\bx, \by)\sim \mathcal{P}}\big[ \by \notin \widehat{\mathsf{PI}}(\bx, \delta) \big] - \tfrac{1}{m} \sum_{j=1}^m \mathbbm{1} [y_j' \notin \widehat{\mathsf{PI}}(x_j', \delta)] \big| \leq \sqrt{\tfrac{t}{2m}} \;
	\end{align}
	with probability $1 - 2\exp(-t)$ on $\{ x'_j, y'_j\}_{j=1}^m$. Therefore, if we can identify a $\delta$ such that
	\begin{align}
		\tfrac{1}{m} \sum_{j=1}^m \mathbbm{1} [y_j' \notin \widehat{\mathsf{PI}}(x_j', \delta)] \leq \tfrac{3}{4}\alpha \;,
	\end{align}
	and for some later specified choice of $t$ that
	\begin{align}
		\sqrt{\tfrac{t}{2m}} \leq \tfrac{1}{4}\alpha \;,
	\end{align}
	the proof will complete. Observe that both $\Pr_{(\bx, \by)\sim \mathcal{P}}\big[ \by \notin \widehat{\mathsf{PI}}(\bx, \delta) \big]$ and its empirical counterparts $\tfrac{1}{m} \sum_{j=1}^m \mathbbm{1} [y_j' \notin \widehat{\mathsf{PI}}(x_j', \delta)]$ are monotonic in $\delta$.
	We claim that the Algorithm~\ref{alg:calibration} will terminate with at most $\log_2\big(\tfrac{2L(\Delta+1)}{\alpha}\big)$ iterations (namely, disjoint choices of $\delta$ in the while loop). To prove this claim, we note that the algorithm must terminate in the interval $\delta \in [\Delta - \tfrac{\alpha}{2L} , \Delta]$ since we know
	\begin{align}
		\tfrac{1}{m} \sum_{j=1}^m \mathbbm{1} [y_j' \notin \widehat{\mathsf{PI}}(x_j', \Delta - \tfrac{\alpha}{2L})] \leq \Pr_{(\bx, \by)\sim \mathcal{P}}\big[ \by \notin \widehat{\mathsf{PI}}(\bx, \Delta - \tfrac{\alpha}{2L}) \big] +  \sqrt{\tfrac{t}{2m}} \leq L \tfrac{\alpha}{2L} + \tfrac{1}{4}\alpha \leq \tfrac{3}{4}\alpha \;,
	\end{align}
	by the mean value theorem and the upper bound on the Lipschitz constant. With the dyadic search, the algorithm will terminate after at most $\lceil \log_2\big(\tfrac{2L(\Delta+1)}{\alpha}\big) \rceil$ pre-determined dyadic grids of $\delta$'s with the form $G_{\rm dyadic}:= \{ (1 - \tfrac{1}{2^k})\Delta - \tfrac{1}{2^k} ~|~ k = 0, 1, \ldots,  \lceil \log_2\big(\tfrac{2L(\Delta+1)}{\alpha}\big) \rceil \}$. Therefore, take $t = \log \big( \lceil \log_2\big(\tfrac{2L(\Delta+1)}{\alpha}\big) \rceil + 1 \big) + 10\log (m)$ and recall that $m$ is large enough such that
	\begin{align*}
		\sqrt{\tfrac{\log \big( \lceil \log_2\big(\tfrac{2L(\Delta+1)}{\alpha}\big) \rceil + 1 \big) + 10\log (m)}{m}} \leq \tfrac{1}{4}\alpha \;,
	\end{align*}
	then uniformly over the fixed dyadic grid $\delta \in G_{\rm dyadic}$, 
	\begin{align}
		 \sup_{\delta \in G_{\rm dyadic}} ~\Pr_{(\bx, \by)\sim \mathcal{P}}\big[ \by \notin \widehat{\mathsf{PI}}(\bx, \delta) \big] - \tfrac{1}{m} \sum_{j=1}^m \mathbbm{1} [y_j' \notin \widehat{\mathsf{PI}}(x_j', \delta)] \leq \tfrac{1}{4}\alpha
	\end{align}
	with probability at least $1 - 2m^{-10}$. It is easy to see that $\delta^\star(\alpha) \in G_{\rm dyadic}$, and thus on the same event, 
	\begin{align}
		\Pr_{(\bx, \by)\sim \mathcal{P}}\big[ \by \notin \widehat{\mathsf{PI}}(\bx, \delta^\star(\alpha)) \big] \leq \tfrac{1}{m} \sum_{j=1}^m \mathbbm{1} [y_j' \notin \widehat{\mathsf{PI}}(x_j', \delta^\star(\alpha))] + \tfrac{1}{4}\alpha \leq \alpha \;.
	\end{align}
\end{proof}

\subsection{Remaining Experimental Details}


All experiments are conducted using the Python language. The minimal implementation is provided below
{\scriptsize
\begin{lstlisting}[language=Python, caption=Minimal python code, label=code]
import cvxpy as cp

def sdpDual(K1, K2, Y, n, gamma = 1e1):
# K1 kernel for conditional mean, 1st moment
# K2 kernel for conditional variance, 2nd moment   
# Define and solve the CVXPY problem.
    # Create a symmetric matrix variable \hat{B} 
    hB = cp.Variable((n,n), symmetric=True)
    # Create a vector variable \hat{a}
    ha = cp.Variable(n)
	
	# PSD and inequality constraints
    constraints = [hB >> 0] 
    constraints += [
        K2[i,:]@hB@K2[i,:] >= 
        cp.square(Y[i] - K1[i,:]@ha) for i in range(n) 
    ]
    prob = cp.Problem(cp.Minimize(
        gamma*cp.quad_form(ha, K1) + cp.trace(K2@hB)
    ), constraints)

    # Solve the SDP
    prob.solve()
    print("Optimal_Value", prob.value)
    
    return [ha.value, hB.value]
\end{lstlisting}
}

\end{document}